%% file: main.tex
\documentclass{article}

\usepackage[preprint]{neurips_2026}

\usepackage[utf8]{inputenc} 
\usepackage[T1]{fontenc}    
\usepackage{hyperref}       
\usepackage{url}            
\usepackage{booktabs}       
\usepackage{amsfonts}       
\usepackage{nicefrac}       
\usepackage{microtype}      
\usepackage{xcolor}         
\usepackage{amsthm}
\usepackage{dsfont}
\usepackage{thmtools}
\usepackage{thm-restate}
\usepackage{subcaption}
\usepackage{authblk}
\input{macros}
\newtheorem{theorem}{Theorem}[section]
\newtheorem{proposition}[theorem]{Proposition}
\newtheorem{lemma}[theorem]{Lemma}

\newtheorem{definition}[theorem]{Definition}

\title{Near-Optimal Last-Iterate Convergence for Zero-Sum Games with Bandit Feedback and Opponent Actions}

\author[1]{Soumita Hait}
\author[2]{Ping Li}
\author[1]{Haipeng Luo}
\author[3]{Mengxiao Zhang}

\affil[1]{University of Southern California, \texttt{\{hait,haipengl\}@usc.edu}}
\affil[2]{Shanghai University of Finance and Economics, \texttt{pinglee@stu.sufe.edu.cn}}
\affil[3]{University of Iowa, \texttt{mengxiao-zhang@uiowa.edu}}

\begin{document}

\maketitle

\begin{abstract}
Last-iterate convergence of learning dynamics in games has attracted significant recent attention. In two-player zero-sum games with bandit feedback, where only the loss of the selected action pair is observed, \citet{fiegel2025harder} show a separation between average-iterate and last-iterate convergence in duality gap: while the optimal $t^{-\frac{1}{2}}$ rate after $t$ rounds is achievable for the former via standard no-regret algorithms, the latter cannot converge faster than $t^{-\frac{1}{3}}$ in expectation or $t^{-\frac{1}{4}}$ with high probability. However, in many practical settings (such as preference learning), the players observe not only their loss but also the opponent’s action. This raises a natural question: can such additional information enable faster last-iterate convergence? 
We answer this question affirmatively, showing that $t^{-\frac{1}{2}}$ last-iterate convergence is achievable with high probability in this setting, via an efficient algorithm that updates its strategy infrequently by solving an estimated log-barrier-regularized game. We identify fundamental obstacles preventing standard analysis for multi-armed bandits (the single-player case) from generalizing to games, and develop a novel analysis to overcome them. Experiments confirm that our algorithm indeed converges faster than naive baselines and prior methods that do not exploit opponent-action feedback. Finally, we note that our results also improve those for dueling bandits, a special case with skew-symmetric game matrices. 

\end{abstract}

\input{sections/introduction}

\input{sections/preliminaries}

\input{sections/challenges}
\input{sections/algorithm}

\input{sections/experiments}

\bibliographystyle{plainnat}
\bibliography{references}


\appendix

\input{appendix/omitted_proofs}


\end{document}

%% file: macros.tex
\newif\ifspacehack
\usepackage{natbib}
\usepackage{hyperref}
\hypersetup{
    colorlinks = true,
    breaklinks,
    linkcolor = blue,
    citecolor = blue,
    urlcolor  = blue,
}
\usepackage{url} 
\usepackage{graphicx}
\usepackage{mathtools}
\usepackage{footnote}
\usepackage{float}
\usepackage{xspace}
\usepackage{multirow}
\usepackage{xcolor}
\usepackage{wrapfig}
\usepackage{framed}
\usepackage{bbm}
\usepackage[most]{tcolorbox}

\usepackage{footnote}
\usepackage{nicefrac}
\usepackage{makecell}

\usepackage[ruled,vlined]{algorithm2e}
\usepackage{amssymb}
\usepackage{bm}
\makesavenoteenv{tabular}
\makesavenoteenv{table}

\newcommand\innerp[2]{\langle #1, #2 \rangle}
\renewcommand{\tilde}{\widetilde}

\def \R {\mathbb{R}}

\newcommand{\calE}{{\mathcal{E}}}

\newcommand{\Reg}{{\mathrm{Reg}}}

\newcommand{\E}{{\mathbb{E}}}

\newcommand{\inner}[1]{ \left\langle {#1} \right\rangle }

\newcommand{\order}{\mathcal{O}}

\newcommand{\one}{\boldsymbol{1}}

\newcommand{\dgap}{\text{\rm DGap}}

\DeclareMathOperator*{\argmin}{argmin}
\DeclareMathOperator*{\argmax}{argmax}

\newcommand{\wh}{\widehat}

\newcommand{\otil}{\ensuremath{\tilde{\mathcal{O}}}}

\newcommand{\rbr}[1]{\left(#1\right)}
\newcommand{\sbr}[1]{\left[#1\right]}
\newcommand{\cbr}[1]{\left\{#1\right\}}

\usepackage{lipsum,booktabs}
\usepackage{amsmath,mathrsfs,amssymb,amsfonts,bm,enumitem}
\usepackage{rotating}
\usepackage{pdflscape}
\usepackage{hyperref,url}
\hypersetup{
    colorlinks,
    breaklinks,
    linkcolor = blue,
    citecolor = blue,
    urlcolor  = blue,
}
\allowdisplaybreaks
\usepackage{appendix}
\usepackage{multirow,makecell}

\renewcommand{\tilde}{\widetilde}

\def \E {\mathbb{E}}

\def \R {\mathbb{R}}

\def \Pr {\mathsf{Pr}}

\usepackage{mathtools}

\DeclarePairedDelimiter\abs{\lvert}{\rvert}

\newtcolorbox{promptbox}[1][]{
  colback=blue!5!white, colframe=blue!75!black,
  fonttitle=\bfseries, title=Prompt,
  left=2mm, right=2mm, top=2mm, bottom=2mm,
  boxrule=0.5mm,  
  coltitle=black, 
  colbacktitle=blue!15!white, 
  breakable,      
  #1
}

\usepackage{graphicx,color} 

\definecolor{wine_red}{RGB}{228,48,64}
\definecolor{DSgray}{cmyk}{0,1,0,0}

\usepackage{prettyref}
\newcommand{\pref}[1]{\prettyref{#1}}

\newcommand{\savehyperref}[2]{\texorpdfstring{\hyperref[#1]{#2}}{#2}}
\newrefformat{eq}{\savehyperref{#1}{Eq.~\textup{(\ref*{#1})}}}
\newrefformat{eqn}{\savehyperref{#1}{Eq.~(\ref*{#1})}}
\newrefformat{lem}{\savehyperref{#1}{Lemma~\ref*{#1}}}
\newrefformat{event}{\savehyperref{#1}{Event~\ref*{#1}}}
\newrefformat{def}{\savehyperref{#1}{Definition~\ref*{#1}}}
\newrefformat{line}{\savehyperref{#1}{Line~\ref*{#1}}}
\newrefformat{thm}{\savehyperref{#1}{Theorem~\ref*{#1}}}
\newrefformat{tab}{\savehyperref{#1}{Table~\ref*{#1}}}
\newrefformat{corr}{\savehyperref{#1}{Corollary~\ref*{#1}}}
\newrefformat{cor}{\savehyperref{#1}{Corollary~\ref*{#1}}}
\newrefformat{sec}{\savehyperref{#1}{Section~\ref*{#1}}}
\newrefformat{app}{\savehyperref{#1}{Appendix~\ref*{#1}}}
\newrefformat{assum}{\savehyperref{#1}{Assumption~\ref*{#1}}}
\newrefformat{asm}{\savehyperref{#1}{Assumption~\ref*{#1}}}
\newrefformat{asp}{\savehyperref{#1}{Assumption~\ref*{#1}}}
\newrefformat{ex}{\savehyperref{#1}{Example~\ref*{#1}}}
\newrefformat{fig}{\savehyperref{#1}{Figure~\ref*{#1}}}
\newrefformat{alg}{\savehyperref{#1}{Algorithm~\ref*{#1}}}
\newrefformat{rem}{\savehyperref{#1}{Remark~\ref*{#1}}}
\newrefformat{conj}{\savehyperref{#1}{Conjecture~\ref*{#1}}}
\newrefformat{prop}{\savehyperref{#1}{Proposition~\ref*{#1}}}
\newrefformat{proto}{\savehyperref{#1}{Protocol~\ref*{#1}}}
\newrefformat{prob}{\savehyperref{#1}{Problem~\ref*{#1}}}
\newrefformat{claim}{\savehyperref{#1}{Claim~\ref*{#1}}}
\newrefformat{que}{\savehyperref{#1}{Question~\ref*{#1}}}
\newrefformat{op}{\savehyperref{#1}{Open Problem~\ref*{#1}}}
\newrefformat{fn}{\savehyperref{#1}{Footnote~\ref*{#1}}}

\def \epsilon {\varepsilon}



%% file: sections/introduction.tex
\section{Introduction}\label{sec:intro}
Understanding learning dynamics in games is an important area in game theory and machine learning. In particular, last-iterate convergence to Nash equilibria of two-player zero-sum games has received increasing attention since it does not require averaging iterates (which could be cumbersome for neural nets) and provides strong day-to-day behavior guarantees. There has been a long line of research on this topic, much of which focuses on an idealized setting where each player obtains gradient feedback of their strategies. In this case, it has been shown that $t^{-1}$ last-iterate convergence in duality gap is achievable after playing the games for $t$ rounds; see e.g.~\citet{diakonikolas2020halpern, yoon2021accelerated, cai2023doubly, cai_accelerated_2023, cai2024accelerated, cai2025average}.

However, gradient feedback might be too much to ask for in many applications. Often, players can see a (noisy) loss/reward of their selected action, but not the gradient of their mixed strategy. This is referred to as the bandit feedback setting and can be viewed as a direct generalization of stochastic multi-armed bandits (that is, single-player games) to multi-player games. Under this more challenging model, even for the weaker notion of average-iterate convergence, the best rate one can hope for is $t^{-\frac{1}{2}}$ due to standard statistical barriers, and any bandit algorithms with standard $\sqrt{t}$-regret achieve such average-iterate convergence~\citep{freund1999adaptive}.

Achieving last-iterate convergence with bandit feedback, however, is even more challenging due to the severer tension between exploration and exploitation. Earlier results establish slow rates such as $t^{-\frac{1}{8}}$~\citep{cai2023uncoupled, chen2023finite, chen2024last}, $t^{-\frac{1}{6}}$~\citep{cai2023uncoupled}, and $t^{-\frac{1}{5}}$\citep{cai2025average}. More recently, \citet{fiegel2025harder, fiegel2026optimal} show that with bandit feedback, there is indeed a strict separation between average-iterate convergence, for which the optimal rate is $t^{-\frac{1}{2}}$, and last-iterate convergence, for which the optimal in-expectation rate is $t^{-\frac{1}{3}}$ and the optimal high-probability rate is $t^{-\frac{1}{4}}$. 

While bandit feedback is more realistic than gradient feedback, in many situations, in addition to the losses, the players also observe their opponents' actions. For example, in pricing, companies can naturally observe their competitors' price; in online advertising auctions, bidders can observe the winning bid; and in security games, a defender may observe which targets an attacker chose to probe. There are also games where a central controller decides the actions for all players, in which case the action information is also directly available --- a salient example is dueling bandits~\citep{yue2012k} or more generally learning from human preference feedback, often modeled as learning over a skew-symmetric preference game matrix where the algorithm decides which pair of actions to recommend to the user; see e.g.,~\citet{munos2024nash, swamy2024minimaximalist}. Such additional feedback intuitively facilitates estimation and mitigates the exploration burden, leading to a central question that our work focuses on: \textit{does observing the opponent's actions enable faster last-iterate convergence in two-player zero-sum games?}

\paragraph{Contribution}
We answer this question affirmatively and develop an algorithm with $t^{-\frac{1}{2}}$ high-probability last-iterate convergence, which, up to dimension dependence, is optimal and closes the gap between average-iterate and last-iterate convergence.
Our algorithm is computationally efficient and only updates its strategy $\log t$ times after $t$ rounds, where each update solves an estimated log-barrier-regularized game.

Our algorithm shares similarity with the FALCON algorithm of~\citet{simchi2022bypassing} for contextual bandits and can be viewed as a generalization of their algorithm to two-player zero-sum games (but specialized to the non-contextual setting).
However, we highlight that our analysis significantly deviates from theirs and is the main technical contribution of our work.
Indeed, we provide extensive discussion in \pref{sec:obstacle} on why even with the extra information of opponent actions, achieving $t^{-\frac{1}{2}}$ last-iterate convergence is still highly challenging and existing analysis in the bandit literature does not generalize to games.
This is true even when the game has a pure-strategy Nash equilibrium.
At a high level, the key obstacle is that estimating the duality gap of a strategy requires playing other (potentially bad) strategies.
Nevertheless, we overcome this obstacle via a novel analysis that relies on proving multiplicative stability between two consecutive strategies and might be of independent interest even for the single-player case.
We also highlight that since dueling bandits are a special case of our setting, our results also strengthen this line of work by providing last-iterate convergence guarantees rather than only regret bounds. 

Finally, we conduct experiments on the security game benchmark of~\citet{krever2025guard}, showing that our algorithm consistently outperforms both existing methods that ignore opponent-action feedback and naive baselines that exploit it.

\paragraph{Related work}
There is a vast literature on learning in games. We refer the readers to the related work section of~\citet{cai2025average} for prior work under the gradient feedback model and only focus on related work under the bandit feedback model here.

\paragraph{Regret and average/random-iterate convergence}
The seminal result of~\citet{freund1999adaptive} shows that average-iterate convergence rate in two-player zero-sum games is controlled by the social regret of the players.
In particular, standard multi-armed bandit algorithms~\citep{auer2002nonstochastic, zimmert2021tsallis} with optimal $\sqrt{t}$-regret ensure $t^{-\frac{1}{2}}$ average-iterate convergence.
To go beyond the $t^{-\frac{1}{2}}$ barrier, recent work either considers weaker regret notions or develop instance-dependent regret bounds~\citep{maiti2025limitations, ito2025instance, ito2026adversarial}.
\citet{sessa2019no, o2021matrix, ito2026adversarial} also study the effect of opponent-action information on regret, and
we refer to~\citet{ito2026adversarial} for more discussion related to the line of research on Markov games where the opponent-action information also plays an important role.

On the other hand, social dynamic regret, which is equivalent to the cumulative duality gap, controls 
the random-iterate convergence of the learning dynamic. 
The algorithms of \citet{xie2020learning, bai2020provable} based on the well-known optimism principle enjoy $\sqrt{t}$ social dynamic regret (with opponent-action information) and thus $t^{-\frac{1}{2}}$ random-iterate convergence.
Note that our $t^{-\frac{1}{2}}$ last-iterate convergence trivially implies the same for random-iterate convergence, but not the other way around.
In fact, it is known that such optimistic algorithms provably fail to achieve last-iterate convergence even for multi-armed bandits~\citep{liu2024uniform}.

\paragraph{Last-iterate convergence}
As mentioned, a line of recent work studies last-iterate convergence without opponent-action feedback, improving the results from $t^{-\frac{1}{8}}$ down to the optimal in-expectation rate $t^{-\frac{1}{3}}$ and the optimal high-probability rate $t^{-\frac{1}{4}}$~\citep{cai2023uncoupled, chen2023finite, chen2024last, cai2025average, fiegel2025harder, fiegel2026optimal}.
\citet{maiti2026efficient} achieve similar $t^{-\frac{1}{4}}$ rate but require seeing the loss of the mixed strategies instead of the loss of the selected actions.
Another recent work by~\citet{ito2025instance} establishes $t^{-\frac{1}{2}}$ last-iterate convergence for games with a unique pure strategy Nash equilibrium, but the bound involves game-dependent constants that can be arbitrarily large.
To our knowledge, no prior work has studied the benefits of opponent-action information for last-iterate convergence.

\paragraph{Connection to bandits}
When one player has only one feasible action, our problem becomes a single-player game and recovers the classic stochastic multi-armed bandit problem~\citep{lai1985asymptotically}.
Similarly, most work in this area focuses on developing regret guarantees.
To our knowledge, \citet{liu2024uniform} are the first to explicitly study last-iterate convergence, although an earlier result by~\citet{simchi2022bypassing} also achieves last-iterate convergence implicitly for the more general contextual bandit problem.
We will discuss in depth why these approaches do not generalize to game in \pref{sec:obstacle}.
Similar to~\citet{ito2025instance}, \citet{zhan2025last} develop last-iterate convergence rate (for multi-armed bandits) that involves potentially large instance-dependent constants.

%% file: sections/preliminaries.tex
\section{Preliminaries}\label{sec:preliminaries}
\paragraph{General notations} 
For a positive integer $k$, let $[k]$ denote the set $\{1,2,\dots,k\}$.
Define $\R^d_+$ to be the positive orthant of the $d$-dimensional Euclidean space, and $\Delta_d\triangleq\{x\in \R^d_+, \sum_{i=1}^d x_i=1\}$ to be the $(d-1)$-dimensional simplex. 
For a vector $v\in \R^{d}$, we denote by $v_i$ its $i$-th entry.
Similarly, for a matrix $M\in \R^{m\times n}$, we denote by $M_{ij}$ its $(i,j)$-th entry.
Let $e_i$ be the one-hot vector in an appropriate dimension with the $i$-th entry being $1$ and all other entries being $0$. 
Let $\mathbf{1}$ denote the all-one vector in an appropriate dimension.

\paragraph{Two-player zero-sum games} 
We study two-player zero-sum games specified by a game matrix $A\in[-1,1]^{d\times d}$, where $A_{ij}$ denotes the expected loss incurred by the row player and, equivalently, the expected reward obtained by the column player, when the row player selects action $i$ and the column player selects action $j$.\footnote{For the ease of presentation, we assume that both players have the same number of actions, but our results directly generalize to the case where they have different number of actions.} 
For a mixed strategy pair $(x,y) \in \Delta_d \times \Delta_d$, the expected loss of the row player is given by $x^\top A y$ (which is also the expected reward of the column player), and the duality gap of this strategy pair is defined as
\[
\dgap(x,y) \triangleq \max_{y'\in\Delta_d}x^\top Ay' - \min_{x'\in\Delta_d}x'^\top Ay = \max_{i,j\in [d]} x^\top A e_j - e_i^\top A y,
\]
which is always non-negative by definition.
It is well-known that a strategy pair $(x,y)$ has $0$ duality gap if and only if it is a Nash equilibrium of the game (that is, both players are best-responding to each other: $x^{\top}Ay' \le x^{\top}Ay \le {x'}^\top Ay$ for all $x', y'\in\Delta_d$).
Therefore, duality gap is a natural measure for convergence rate to Nash equilibria. 

While Nash equilibria always exist, 
a pure-strategy Nash equilibrium (PSNE) of the form $(e_i, e_j)$ for some $i,j \in[d]$ might not.
In \pref{sec:obstacle}, we will discuss why our problem is challenging even when the game has a unique PSNE, but we emphasize that our results hold generally, without any assumption on the games (other than boundedness).

\paragraph{Learning via repeated play with bandit feedback and opponent actions}
We consider a standard repeated play setting in which both players have no prior knowledge of the game matrix $A$. In each round $t = 1,2,\ldots$ , the row player selects a mixed strategy $x_t \in \Delta_d$ and samples an action $i_t \sim x_t$;
similarly, the column player selects a mixed strategy $y_t \in \Delta_d$ and samples an action $j_t \sim y_t$. Following that, both players observe bandit feedback, that is, a noisy loss/reward signal $r_t \in [-1,1]$ satisfying $\E\sbr{r_t}=A_{i_tj_t}$. 
In addition, row player observes the opponent's action $j_t$ and similarly the column player observes $i_t$.

An algorithm satisfies last-iterate convergence if it ensures $\dgap(x_t, y_t)\rightarrow 0$ as $t$ increases, which is stronger than other heavily-studied measures such as average-iterate convergence:  $\dgap\rbr{\frac{1}{t}\sum_{\tau=1}^t x_\tau, \frac{1}{t}\sum_{\tau=1}^t y_\tau} \rightarrow 0$,
random-iterate convergence: $\frac{1}{t}\sum_{\tau=1}^t \dgap(x_\tau, y_\tau) \rightarrow 0$, and best-iterate convergence: $\min_{\tau \leq t} \dgap(x_\tau, y_\tau) \rightarrow 0$.
As discussed, the focus of our work is to show how the additional opponent-action information helps speed up last-iterate convergence beyond the $t^{-\frac{1}{3}}$ (in-expectation) or $t^{-\frac{1}{4}}$ (high-probability) rate that are optimal in the absence of opponent-action feedback, a question that has not been explored in the literature to our knowledge.

We also recall the real-life examples in \pref{sec:intro} that illustrate the ubiquitousness of opponent-action information in practice.
In particular, we point out again that dueling bandits are a special case of our setting where the game matrix $A$ (known as the preference matrix in this case) is skew-symmetric ($A = -A^\top$), and the algorithm needs to select a pair of actions $(i_t, j_t)$ to solicit preference feedback from the user.
In this case, the information $(i_t, j_t)$ is obviously known to the algorithm itself.

Finally, since we will use multi-armed bandits as an illustrative special case in many subsequent discussions, we remind the reader that when the game matrix $A$ is $d$ by $1$, our setting recovers the standard stochastic $d$-armed bandit problem.
Since the game matrix is just a vector in this case, we will denote it by $\ell \in [-1,1]^d$ instead.
It is clear that the duality gap of a strategy $x \in \Delta_d$ in this case reduces to the (instantaneous) regret of this strategy compared to the best action, which we denote as $\Reg(x)=\inner{x,\ell}- \min_{i\in [d]} \ell_{i}$.



%% file: sections/challenges.tex
\section{The Challenges: Why Existing Approaches Fail}\label{sec:obstacle}

Before introducing our solution, this section provides an in-depth discussion on the difficulty of the problem and  why existing approaches in the bandit literature do not generalize to games, thereby highlighting both the need for a fundamentally different approach and the novelty of our work.

First and foremost, we recall the difficulty of achieving last-iterate convergence compared to other goals such as achieving low regret, even in the special case of multi-armed bandits.
Indeed, while it is well-known that a key challenge for any learning problems with partial-information feedback is the exploitation-exploration trade-off, 
ensuring last-iterate convergence creates even stronger tension between them.
This is because last-iterate convergence requires that in \textit{every} round, a reasonably good strategy is played, while achieving low regret only requires that in \textit{most} rounds, a good strategy is played.
For example, while optimism-based approaches, such as the classic Upper Confidence Bound (UCB) algorithm~\citep{auer2002adaptive}, ensures optimal regret in many bandit problems, 
the fact that it occasionally selects bad actions for exploration renders them incompatible with last-iterate convergence, as formally proven in~\citet{liu2024uniform}.

Naively, to play a good strategy in every round while ensuring sufficient exploration, one can mix the empirically best strategy with a uniform exploration strategy.
However, it is well-known that such a naive exploitation-exploration trade-off fails to achieve $t^{-\frac{1}{2}}$ convergence even for bandits.
Indeed, in \pref{app:NEUniform}, we prove that for our problem, this naive strategy only achieves $\dgap(x_t, y_t) = \order(d^{\frac{1}{2}}t^{-\frac{1}{4}})$ with high probability. 
We will use this strategy as one of the baselines in our experiments in \pref{sec:exp}.

So how do we achieve $t^{-\frac{1}{2}}$ convergence even in bandits?
To our knowledge, there are essentially two types of approaches in the literature: hard-elimination and soft-elimination.
We review both approaches and discuss why they do not generalize to games.

\paragraph{Hard-elimination}
\citet{liu2024uniform} study different variants of the hard-elimination approach from the literature, but they all share the same idea:
eliminate actions that, based on current observations, cannot be the best action (with high probability), and (almost) uniformly explore the remaining actions.
Such uniform exploration among plausibly good actions makes sure that in each round $t$, every action $i$ that is not eliminated yet has regret $\Reg(e_i)$ less than $\otil(\sqrt{d/t})$, meaning that it is always ``safe'' to play any of them.

When trying to generalize hard-elimination to games, the first challenge one immediately meets is that we are no longer searching for the best strategy over a finite space; indeed, while in bandits the best strategy can always be in the form of $e_i$ for some $i$, it is entirely possible that a game has no PSNE of the from $(e_i, e_j)$ for some $i$ and $j$.
Therefore, the search space in game becomes $\Delta_d \times \Delta_d$ instead of $[d] \times [d]$.

However, we emphasize that \textit{even when a PSNE exists, hard-elimination over the space $[d]\times [d]$ is still a bad idea.}
This is due to the following key obstacle:
estimating $\dgap(e_i, e_j) = \max_{h, k \in [d]} A_{ih} - A_{kj}$ for some pure strategy $(e_i, e_j)$ cannot be done by simply playing $(e_i, e_j)$ itself since this only yields samples of $A_{ij}$; instead, by definition, it requires playing $(e_i, e_h)$ and $(e_k, e_j)$ where $e_h$ and $e_k$ are the best response to $e_i$ and $e_j$ respectively that realize the maximum in the duality gap definition. 
Importantly, \textit{these strategies $(e_i, e_h)$ and $(e_k, e_j)$ themselves can be bad strategies with a large duality gap!}
To see this, simply consider a $3$ by $3$ example $A = \begin{bmatrix}0 &-1 &0 \\ 1 &0 &-\epsilon \\ 0 &\epsilon &0 \end{bmatrix}$
for some small and unknown $\epsilon > 0$.
The strategies $(e_2, e_3)$ and $(e_3, e_2)$ have a large duality gap of $1+\epsilon$, but if we eliminate them early, then there is no way to get samples from $A_{23}$ or $A_{32}$ to determine the value of $\epsilon$ and thus no way to estimate the duality gap of $(e_3, e_3)$ (which is $2\epsilon)$.
This is in sharp contrast to the bandit case where playing an action naturally yields samples that helps estimate the quality of this action itself.

\paragraph{Soft-elimination}
We use the term soft-elimination to refer to approaches that never completely eliminate any actions, but instead adjust the probability of selecting an action based on its estimated quality.
While there is no explicit study of this approach for achieving $t^{-\frac{1}{2}}$ last-iterate convergence in multi-armed bandits,
the idea is in fact implicitly described in several works on contextual bandits, starting from~\citet{agarwal2014taming}.
To facilitate discussions, we describe an algorithm based on this idea in \pref{alg:falcon}, which is most closely related to the FALCON algorithm of~\citet{simchi2022bypassing} for contextual bandits and can be seen as a variant of it specified to the non-contextual setting.

Specifically, \pref{alg:falcon} proceeds in epochs with doubling length.
Within each epoch $s$, a sampling strategy $x_s \in \Delta_d$ for this epoch is computed via $\argmin_{x\in\Delta_d} \innerp{x}{\wh{\ell}_{s}}+\gamma_s\sum_{i=1}^d\log\frac{1}{x_i}$,
where $\wh{\ell}_{s}$ is a loss estimator computed by simply averaging data from the last epoch, $\gamma_s>0$ is a learning rate,
and the part $\sum_{i=1}^d\log\frac{1}{x_i}$ is often known as the log-barrier regularizer, included to encourage exploration (since it explodes when $x$ concentrates on a subset of actions).
In other words, we solve an estimated log-barrier-regularized bandit problem in each epoch.
With this strategy $x_s$, we simply sample $i_t$ from $x_s$ for all rounds $t = 2^{s-1},\ldots,2^{s}-1$ within this epoch,
and finally use the observed data to compute the next loss estimate $\wh{\ell}_{s+1}$.

\begin{algorithm}[t]
\caption{FALCON for Stochastic Multi-Armed Bandits}
\label{alg:falcon}
\KwIn{Learning rates $\{\gamma_s\}_{s\geq 1}$.} 

Initialize loss estimator $\wh{\ell}_{1}$ to be the all-zero vector.

\For{epoch $s = 1, 2, \dots $}{
    Compute sampling strategy: $x_{s} =
    \arg\min_{x\in\Delta_d} \innerp{x}{\wh{\ell}_{s}}+\gamma_s\sum_{i=1}^d\log\frac{1}{x_i}$.
    
    Initialize counters: $n_{s,i}=0$ for all $i\in[d]$.
    
    \For{round $t = 2^{s-1},\ldots,2^{s}-1$}{
        Sample $i_t\sim x_s$ and observe realized loss $r_t$ such that $\E[r_t]=\ell_{i_t}$.
        
        Increment counter $n_{s,i_t} \gets n_{s,i_t} + 1$.
    }
    
    Compute loss estimate $\wh{\ell}_{s+1,i}=\frac{\mathbbm{1}\cbr{n_{s,i}\neq0}}{n_{s,i}}\sum_{\tau=2^{s-1}}^{2^{s}-1} \mathbbm{1}\cbr{i_{\tau} = i}\cdot r_{\tau}$ for all $i\in[d]$.
}
\end{algorithm}

Other than specifying the FALCON algorithm to the non-contextual setting and instantiating its regression oracle with the simple empirical mean estimators,
the only slight difference between \pref{alg:falcon} and FALCON is how $x_s$ is defined, but it is known that our definition shares the same properties as theirs; see e.g.~\citet{foster2020beyond, foster2020adapting, zhang2024contextual}.
Our definition is also closely related to the well-known Follow-the-Regularized-Leader (FTRL) framework, and using log-barrier regularizer in this framework for bandits has been extensively studied since the early work such as~\citet{foster2016learning, wei2018more}.
However, an important distinction is that these FTRL algorithms all use importance-weighted estimators instead of the empirical mean estimators that \pref{alg:falcon} uses.

The key idea in the analysis for such soft-elimination methods is to show that, with an appropriate choice of $\gamma_s$ (that is of order $1/\sqrt{d 2^s}$),  for each action $i$, its regret $\Reg(e_i)$ is close to the estimated version $\wh{\Reg}_s(e_i)$, where
$\wh{\Reg}_s(x) = \innerp{x}{\wh{\ell}_s}-\min_{i\in[d]}\wh{\ell}_{s,i}$, in a multiplicative sense: $\Reg(e_i) \leq 2 \wh{\Reg}_s(e_i) + \otil(d\gamma_s)$ and similarly $\wh{\Reg}_s(e_i) \leq 2\Reg(e_i) + \otil(d\gamma_s)$.
This is enabled by the following key low-regret-low-variance property of the algorithm that underscores the essence of a line of previous work on contextual bandits; see e.g., ~\citet[Algorithm~1]{agarwal2014taming}, \citet[Observation~2]{simchi2022bypassing}, and~\citet[Lemma~6]{zhang2024contextual}.

\begin{lemma}[Low-Regret-Low-Variance]
\label{lem:low-reg-low-var}
    For every epoch $s\ge 1$, \pref{alg:falcon} satisfies the following:
    \begin{equation}\label{eq:low-reg}
        \wh{\Reg}_s(x_s) \le d\gamma_s ,
    \end{equation}
    \begin{equation}\label{eq:low-var}
        \frac{1}{x_{s,i}}\le d + \frac{\wh{\Reg}_s(e_i)}{\gamma_s},~\forall~i\in[d].
    \end{equation}
\end{lemma}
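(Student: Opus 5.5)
The plan is to use the KKT conditions of the strictly convex problem defining $x_s$. The objective $F(x)=\innerp{x}{\wh{\ell}_s}+\gamma_s\sum_i\log\frac1{x_i}$ blows up at the boundary of the simplex, so the minimizer lies in the interior, with $x_{s,i}>0$ for all $i$. Only the equality constraint $\sum_i x_i=1$ is then active, and there is a multiplier $\lambda\in\R$ with
\[
\wh{\ell}_{s,i}-\frac{\gamma_s}{x_{s,i}}=\lambda\quad\text{for all } i\in[d],
\qquad\text{that is,}\qquad
x_{s,i}=\frac{\gamma_s}{\wh{\ell}_{s,i}-\lambda}.
\]
In particular $\wh{\ell}_{s,i}-\lambda>0$ for every $i$. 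Writing $\ell^\star=\min_i\wh{\ell}_{s,i}$, this gives $\lambda<\ell^\star$.

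For \pref{eq:low-reg}, I would multiply the stationarity condition by $x_{s,i}$ and sum over $i$. This yields
\[
\innerp{x_s}{\wh{\ell}_s}-\lambda=d\gamma_s,
\]
using $\sum_i x_{s,i}\cdot\frac{\gamma_s}{x_{s,i}}=d\gamma_s$ and $\sum_i x_{s,i}=1$. Since $\lambda<\ell^\star$, it follows that $\wh{\Reg}_s(x_s)=\innerp{x_s}{\wh{\ell}_s}-\ell^\star< \innerp{x_s}{\wh{\ell}_s}-\lambda=d\gamma_s$.

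For \pref{eq:low-var}, I would write
\[
\frac{1}{x_{s,i}}=\frac{\wh{\ell}_{s,i}-\lambda}{\gamma_s}=\frac{\wh{\ell}_{s,i}-\ell^\star}{\gamma_s}+\frac{\ell^\star-\lambda}{\gamma_s}.
\]
The first term equals $\wh{\Reg}_s(e_i)/\gamma_s$. For the second term I need $\ell^\star-\lambda\le d\gamma_s$. Let $i^\star$ attain the minimum $\ell^\star$. Then $x_{s,i^\star}=\gamma_s/(\ell^\star-\lambda)$, and since $x_{s,i^\star}\ge 1/d$ would suffice, I first tried showing that directly. This does not follow immediately, so I use a cleaner route. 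For every $j$, $\wh{\ell}_{s,j}-\lambda\ge\ell^\star-\lambda$, hence $x_{s,j}\le \gamma_s/(\ell^\star-\lambda)$. Summing over $j$ gives $1\le d\gamma_s/(\ell^\star-\lambda)$, which is exactly $\ell^\star-\lambda\le d\gamma_s$. Substituting this back gives $1/x_{s,i}\le d+\wh{\Reg}_s(e_i)/\gamma_s$.

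The only delicate point is justifying the interior minimizer and the KKT form. Existence follows from continuity and coercivity: $F\to\infty$ at the boundary. Uniqueness follows from strict convexity of the log-barrier. With these in place, everything else is a direct algebraic consequence of the multiplier identity. In particular, the lemma uses no property of $\wh{\ell}_s$ beyond being a fixed real vector, so it holds deterministically for every epoch.
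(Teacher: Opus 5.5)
Your proof is correct and is essentially the paper's argument: the paper also uses first-order optimality of the log-barrier problem, testing $\innerp{x-x_s}{\wh{\ell}_s-\gamma_s/x_s}\ge 0$ at $x=e_i$ to get $\innerp{x_s}{\wh{\ell}_s}-\wh{\ell}_{s,i}\le d\gamma_s-\gamma_s/x_{s,i}$ (\pref{lem:estimated-reg-bound}). Your multiplier identity $\lambda=\innerp{x_s}{\wh{\ell}_s}-d\gamma_s$ gives this even with equality, and both claims are read off from it. The same identity also gives $\ell^\star-\lambda\le d\gamma_s$ in one line via $\innerp{x_s}{\wh{\ell}_s}\ge\ell^\star$, so your summation detour is not needed.
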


Note that $x_{s,i}$ is the probability of getting a sample for action $i$ in each round of epoch $s$ and thus $1/x_{s,i}$ controls the variance of the loss estimate for this action.
\pref{eq:low-var} therefore states that the variance for action $i$ is controlled by its estimated regret: the better the action appears to be, the lower the estimation variance is,
which is exactly the key to show the multiplicative closeness between $\Reg(e_i)$ and $\wh{\Reg}_s(e_i)$.
Such a low-variance property holds thanks to the exploration induced by the special log-barrier regularization.
On the other hand, due to such closeness, the low-estimated-regret property of \pref{eq:low-reg} thus immediately implies that the true per-round regret of the algorithm is also small, hence achieving $\otil(d\gamma_s) = \otil(\sqrt{d/t})$ last-iterate convergence~\citep[Lemma~9]{simchi2022bypassing}.

To understand the difficulty of generalizing such soft-elimination methods to games, let us consider again the simpler case with a PSNE.
The most obvious generalization would be to try to ensure multiplicative closeness between $\dgap(e_i, e_j)$ and its estimated version for any action pair $(e_i, e_j)$.
This is impossible due to the exactly same obstacle mentioned before: estimating $\dgap(e_i, e_j)$ requires playing other potentially bad strategies.
Indeed, \citet{saha2022efficient} already use the same aforementioned $3$ by $3$ example to show that if such multiplicative closeness holds, then the algorithm cannot have $\sqrt{t}$ regret, let alone $t^{-\frac{1}{2}}$ last-iterate convergence.
In \pref{app:low-reg-low-var}, we provide more discussion on why a low-duality-gap-low-variance property similar to \pref{lem:low-reg-low-var}, which our final algorithm \textit{does} ensure, fails to show $t^{-\frac{1}{2}}$ last-iterate convergence.

We point out that \citet{saha2025efficient} make an attempt to get around this obstacle in the contextual dueling bandit setting by imposing a strong "Idempotent Best-Response" assumption.
When specified to our setting, this would mean that their results only hold for a special subset of games with a PSNE.
Also, their algorithm requires performing least squares regression over this class of games, which is unlikely to be computationally efficient. 

%% file: sections/algorithm.tex
\section{Our Algorithm and Analysis}\label{sec:alg}

Given all the obstacles discussed above, it might seem that all hope is lost.
Nevertheless, we propose a solution that algorithmically follows the soft-elimination approach but analytically requires a different and novel analysis. 
Specifically, our algorithm, called Phased Minimax Optimization with Log-Barrier Regularization (PMO-LB), is shown in \pref{alg:game_falcon}.
It follows \pref{alg:falcon} with the same epoch schedule and empirical mean estimator $\wh{A}_{s,ij}$ for each $(i,j)$ entry of the unknown matrix $A$,
and generalizes it in how the sampling strategy is computed: $(x_s, y_s)$ is the Nash equilibrium of the estimated log-barrier-regularized game specified by $\Phi_s$ in \pref{eq:Phi}. Equivalently, $x_s = \argmin_{x\in \Delta_d}\max_{y\in \Delta_d} \Phi_s(x, y)$ and $y_s = \argmax_{y\in \Delta_d}\min_{x\in \Delta_d} \Phi_s(x, y)$.
Such a (unique) Nash equilibrium must exist since $\Phi_s$ is strongly convex in $x$ and strongly concave in $y$, and can be computed efficiently using standard methods from the saddle point optimization literature.
When the game matrix is $d$ by $1$, it is clear that our algorithm reduces to \pref{alg:falcon}.
Before analyzing our algorithm, we make the following important remarks.

\paragraph{Uncoupled/coupled dynamics}
While we present both players' algorithm together in \pref{alg:game_falcon}, 
it is clear that these two players can compute their own strategy \textit{independently without any extra coordination/coupling/shared randomness}.
Indeed, thanks to the opponent-action information, each player can construct the estimator $\wh{A}_s$ and compute their strategy on their own.
Whether this is considered as an uncoupled or coupled learning dynamic depends on how one define them; for example, \citet{fiegel2025harder} define uncoupled learning dynamics as those that do not have the opponent-action information.

\paragraph{Connection to existing algorithms}
Other than being a generalization of \pref{alg:falcon} (which itself is a variant of FALCON~\citep{simchi2022bypassing}),
our algorithm is also connected to several others in the literature.
For example, \citet{saha2025efficient} also generalize the FALCON algorithm to contextual dueling bandits.
Their algorithm shares similarities with ours but importantly computes a \textit{joint} distribution over action pairs and thus requires coupling when sampling actions.
Moreover, as mentioned in \pref{sec:obstacle}, their results hold only under a strong assumption, which is inevitable since their analysis still follows the same low-regret-low-variance approach discussed in \pref{sec:obstacle}, proven to fail generally according to~\citet{saha2022efficient}.
On the other hand, our algorithm also shares similarity with~\citet[Algorithm 3]{cardoso2019competing}, with the following differences: first, they use importance-weighted estimators instead of empirical mean estimators; second, they use entropy regularizer instead of the log-barrier regularizer (the latter is critical for our analysis as it will become clear);
finally, their algorithm is designed for a quite different problem with time-varying game matrices, and the metric they study is a notion called NE regret instead of last-iterate convergence.
Finally, we note that the algorithm of~\citet{fiegel2026optimal} for the setting without opponent-action information also uses log-barrier regularization but in the FTRL framework and does not involve minimax optimization.
Their algorithm is closer to that of~\citet{cai2023uncoupled} with the difference being the form of the regularizer.

\begin{algorithm}[t]
\caption{Phased Minimax Optimization with Log-Barrier Regularization (PMO-LB)}
\label{alg:game_falcon}
\KwIn{Learning rate $\{\gamma_s\}_{s\geq 1}$.} 

Initialize game estimator $\wh{A}_1$ to be the all-zero $d$ by $d$ matrix.


\For{$s = 1, 2, \dots $}{
    Define an estimated log-barrier-regularized game with the following payoff function:
    \begin{equation}\label{eq:Phi}
    \Phi_s(x,y) \triangleq x^\top\wh{A}_sy + \gamma_s\sum_{i=1}^d\log\frac{1}{x_i} - \gamma_s\sum_{j=1}^d\log\frac{1}{y_j}.
    \end{equation}

    For row player: compute $x_s = \argmin_{x\in \Delta_d}\max_{y\in \Delta_d} \Phi_s(x, y)$.

    For column player: compute $y_s = \argmax_{y\in \Delta_d}\min_{x\in \Delta_d} \Phi_s(x, y)$.

    Initialize counters: $n_{s,ij}=0$ for all $i,j\in[d]$.
    
    \For{$t=2^{s-1},\dots,2^{s}-1$}{
        Row player samples $i_t\sim x_s$ and column player samples $j_t\sim y_s$.
        
        Both players observe $(i_t, j_t)$ and $r_t$ with $\E[r_t]=A_{i_tj_t}$.
        
        Increment counter $n_{s,i_tj_t}\leftarrow n_{s,i_tj_t}+1$.
    }

    Compute game estimator $\wh{A}_{s+1,ij}=\frac{\mathbbm{1}\cbr{n_{s,ij}\neq 0}}{n_{s,ij}}\sum_{\tau=2^{s-1}}^{2^{s}-1}r_\tau\cdot \mathbbm{1}\{i_\tau=i,j_\tau=j\}$ for all $i,j\in [d]$.
}
\end{algorithm}

\subsection{Main Results and Analysis}\label{sec:result-analysis}
Our main result is the following theorem.
\begin{restatable}{theorem}{dgapGame}
\label{thm:dgapGame}
For any fixed $\delta \in (0,1)$,
\pref{alg:game_falcon} with $\gamma_s = 128d\cdot 2^{-s/2}\sqrt{\log(8d^2s^2/\delta)}$ guarantees that with probability at least $1-\delta$, 
$\dgap(x_t,y_t) = \order\rbr{d^2\sqrt{\frac{\log(d\log t/\delta)}{t}}}$ for all $t\geq 1$, where $(x_t,y_t)$ denotes the strategy pair played in round $t$.
\end{restatable}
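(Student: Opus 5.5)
We would prove \pref{thm:dgapGame} by induction over epochs, carrying a \emph{multiplicative stability} invariant between consecutive strategy pairs instead of the closeness of per-action duality gaps that \pref{sec:obstacle} showed to be impossible.

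\textbf{Concentration event.} Let $N_s=2^{s-1}$ be the length of epoch $s$. By Bernstein/Freedman and a union bound over $(i,j)$ and $s$ (the budget $\delta/(8d^2s^2)$ is built into $\gamma_s$), with probability $1-\delta$ we have for all $s,i,j$:
\[
n_{s,ij}\ge \tfrac12 N_s x_{s,i}y_{s,j} \quad\text{whenever } N_s x_{s,i}y_{s,j}\gtrsim \log(8d^2s^2/\delta),
\qquad
|\wh A_{s+1,ij}-A_{ij}|\lesssim \sqrt{\frac{\log(8d^2s^2/\delta)}{N_s x_{s,i}y_{s,j}}}.
\]
Everything below is deterministic on this event.

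\textbf{First-order facts about the regularized NE.} The KKT conditions for $\Phi_s$ give multipliers $\lambda_s,\mu_s$ with
\[
(\wh A_s y_s)_i-\frac{\gamma_s}{x_{s,i}}=\lambda_s ,
\qquad
(\wh A_s^\top x_s)_j+\frac{\gamma_s}{y_{s,j}}=\mu_s .
\]
These yield the game analogue of \pref{lem:low-reg-low-var}. First, the estimated duality gap satisfies $\wh{\dgap}_s(x_s,y_s)\le 2d\gamma_s$. Second, the low-variance bounds hold:
\[
\frac{1}{x_{s,i}}\le d+\frac{(\wh A_s y_s)_i-\min_k(\wh A_s y_s)_k}{\gamma_s},
\]
and symmetrically for $y_s$.

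\textbf{Main step: multiplicative stability.} We show inductively that $x_{s+1,i}\in[\tfrac12 x_{s,i},\,2x_{s,i}]$, and likewise for $y$. This is where the log-barrier is essential. Write $\Phi_{s+1}$ as a perturbation of $\Phi_s$: the estimate changes from $\wh A_s$ to $\wh A_{s+1}$, and $\gamma$ shrinks by $\sqrt2$. Then compare the two saddle points using the local norm of the log-barrier Hessian, $\|u\|_{x}^2=\sum_i u_i^2/x_i^2$.

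A local-norm bound $\|x_{s+1}-x_s\|_{x_s}\le\tfrac12$ gives the multiplicative claim. To get it, we need the perturbation $(\wh A_{s+1}-\wh A_s)y$, measured in the dual local norm weighted by $x_{s,i}$, to be $\lesssim\gamma_{s+1}$. Each entry error is $\sqrt{\log/(N x_{s,i}y_{s,j})}$. Weighting by $x_{s,i}$ and $y_j$, and using the inductive stability $y\approx y_s$, Cauchy–Schwarz gives
\[
\sqrt{\sum_i x_{s,i}^2\Bigl(\sum_j y_{s,j}\,\mathrm{err}_{ij}\Bigr)^2}\;\lesssim\; d\sqrt{\log/N_s},
\]
which is $\le\gamma_{s+1}/C$ by the choice of $\gamma_s$ (hence the factor $d$ in $\gamma_s$). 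Making this rigorous for a saddle point rather than a minimizer needs strong monotonicity of the joint operator in the product local norm, plus a fixed-point/continuity argument, because the norm depends on the unknown $x_{s+1}$. \textbf{I expect this step to be the main obstacle.}

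\textbf{From stability to the true duality gap.} For any deviation $e_k$ of the row player, the quantity $e_k^\top A y_s$ is estimated using entries $(k,j)$ with sampling probability $x_{s-1,k}y_{s-1,j}\approx x_{s,k}y_{s,j}$ by stability. The error is at most
\[
\sqrt{\log/(N x_{s,k})}\;\le\;\sqrt{\log/N}\,\sqrt{d+\mathrm{gap}_k/\gamma_s}\;\le\;\tfrac12\,\mathrm{gap}_k+\order(d\gamma_s),
\]
by AM-GM. Here $\mathrm{gap}_k$ is the estimated suboptimality of $k$ against $y_s$. We never need the gap of $(e_k,\cdot)$ itself, which is how we sidestep the obstacle of \pref{sec:obstacle}. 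The same bound holds for $x_s^\top A y_s$. Hence the true best-response values are within a factor-2 plus $\order(d\gamma_s)$ of the estimated ones, and
\[
\dgap(x_s,y_s)\le 2\,\wh{\dgap}_s+\order(d\gamma_s)=\order\Bigl(d^2\sqrt{\log(d s/\delta)/2^{s}}\Bigr).
\]
Since $t\asymp 2^s$ in epoch $s$ and $s\le\log_2 t+1$, this gives the stated rate.
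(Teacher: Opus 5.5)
\textbf{The stability step fails.} Your overall structure is the paper's: concentration, KKT facts, an inductive stability invariant, and then letting the estimated suboptimality $\mathrm{gap}_k$ absorb the estimation error. Your last step is sound. It is the paper's AM--GM against the $-\gamma_s/x_{s,i}$ term of \pref{lem:dgapEstGame} in another form, and it only uses the one-sided bound $x_{s,k}\le\alpha x_{s-1,k}$.

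The problem is the invariant you want to carry, $x_{s+1,i}\in[\tfrac12 x_{s,i},2x_{s,i}]$ (equivalently $\|x_{s+1}-x_s\|_{x_s}\le\tfrac12$). It is false for the prescribed $\gamma_s$, even with exact estimates.

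\emph{Why your budget misses it.} You count only $\wh A_{s+1}-\wh A_s$ and drop the learning-rate change. After normalizing by $\gamma$, the linear term gets multiplied by $\rho_s=\gamma_s/\gamma_{s+1}\in(1.2,\sqrt2]$. By your own KKT identity, $(\rho_s-1)\wh A_s y_s/\gamma_s$ equals $(\rho_s-1)/x_s$ modulo constant vectors. Its dual local norm is of order $(\rho_s-1)\sqrt d$ unless $x_s$ is nearly uniform, so it is not small.

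\emph{A concrete instance.} Take every column of $A$ equal to $\ell$, with $\ell_1=0$, $\ell_i=c$ for $i\ge 2$, and deterministic rewards. Once all entries are sampled, $\wh A_s=A$, so $\wh A_s y=\ell$ for every $y$. Then $x_{s,1}=1/u$, where $1/u+(d-1)/(r+u)=1$ and $r=c/\gamma_s$.
\begin{itemize}
\item If $c=d\gamma_s$ (take $s$ large enough that $d\gamma_s\le 1$), then $u=\sqrt d$, so $x_{s,1}=1/\sqrt d$.
\item One epoch later $r=\rho_s d$, and $u$ solves $u^2+(\rho_s-1)du-\rho_s d=0$, so $u\le\rho_s/(\rho_s-1)$ and $x_{s+1,1}\ge 1/6$.
\end{itemize}
Hence $x_{s+1,1}/x_{s,1}=\Omega(\sqrt d)$, and the induction cannot close with any constant factor. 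Separately, the saddle-point version of your local-norm argument, which you flagged yourself, is left unproven.

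\textbf{What is true, and how the paper gets it.} Your final step only needs $x_{s+1,i}\le 16d\,x_{s,i}$ and $y_{s+1,j}\le 16d\,y_{s,j}$. The paper proves this with no local norms at an unknown point and no fixed-point argument. Test the first-order condition of epoch $s$ at $x_{s+1}$ and that of epoch $s+1$ at $x_s$, do the same for $y$, and add all four.
\begin{itemize}
\item The left side becomes $Q_s+R_s$, with $Q_s=\sum_i (x_{s+1,i}-x_{s,i})^2/(x_{s+1,i}x_{s,i})$, which is symmetric in the two endpoints.
\item The bilinear cross terms $\langle x_{s+1}-x_s,\wh A_{s+1}(y_s-y_{s+1})\rangle+\langle y_{s+1}-y_s,\wh A_{s+1}^\top(x_{s+1}-x_s)\rangle$ cancel exactly. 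This replaces your strong-monotonicity step.
\item Each learning-rate term is at most $(\rho_s-1)d$ by optimality at epoch $s$. This is exactly the obstruction above, and it is what forces $\alpha=\Theta(d)$.
\item The estimator terms are at most $\varepsilon_s\sqrt{dQ_s}$ and $\varepsilon_s\sqrt{dR_s}$ by $\calE_2$ and the induction hypothesis, with $\varepsilon_s=(1+\alpha)\max\{\beta_s,\beta_{s+1}\}$.
\end{itemize}
Solving the resulting inequality gives $Q_s+R_s\le 4(\rho_s-1)d+2\rho_s^2\varepsilon_s^2 d/\gamma_s^2\le(\sqrt\alpha-1/\sqrt\alpha)^2$ for $\alpha=16d$ and $\gamma_s=16d\beta_s$. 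Since $u+u^{-1}-2\le Q_s$ for $u=x_{s+1,i}/x_{s,i}$, this gives $u\le\alpha$.

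\textbf{Plugging back in.} With $\alpha=16d$, your last step bounds the error by $\tfrac12\mathrm{gap}_k+\alpha d\beta_s+\alpha^2 d\beta_s^2/(2\gamma_s)=\tfrac12\mathrm{gap}_k+O(d^2\beta_s)$. Combined with $\wh\dgap_s(x_s,y_s)\le 2d\gamma_s$, this yields the stated $O(d^2\beta_s)$ rate.
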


As mentioned, the best existing results prior to our work are those that do not utilize the opponent-action information: $t^{-\frac{1}{3}}$ (in-expectation) from~\citet{fiegel2025harder} and $t^{-\frac{1}{4}}$ (high-probability) from~\citet{fiegel2026optimal}.
Our work is the first to show that with opponent-action feedback, $t^{-\frac{1}{2}}$ last-iterate convergence is achievable, eliminating the gap between last-iterate and average-iterate convergence, at least in terms of the dependence on $t$.
As for the dependence on $d$, the best lower bound is the standard $\Omega(\sqrt{d/t})$ for average-iterate convergence, coming from the bandit literature~\citep{auer2002nonstochastic}.
We are unaware of tighter lower bounds for last-iterate convergence with opponent-action information, but we suspect that our $d^2$ dependence is suboptimal.

Despite our algorithm being a direct generalization of \pref{alg:falcon}, as mentioned, our analysis is (and has to be) different to overcome the obstacles discussed in \pref{sec:obstacle}.
To illustrate our core ideas, we discuss the single-player case again as a warm-up, which can be seen as an alternative analysis of \pref{alg:falcon}. (All omitted proof can be found in the appendix).

\paragraph{Warm-up for the single-player case}
Our starting point is the following property that is strictly stronger than the low-regret-low-variance property of \pref{lem:low-reg-low-var}.

\begin{lemma}
\label{lem:estimated-reg-bound}
    For every epoch $s\ge 1$, \pref{alg:falcon} satisfies
    \begin{equation}\label{eq:est-reg}
        \innerp{x_s}{\wh{\ell}_s}-\wh{\ell}_{s,i} \le d\gamma_s  - \frac{\gamma_s}{x_{s,i}},~\forall~i\in[d].
    \end{equation}
\end{lemma}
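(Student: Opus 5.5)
The plan is to read the inequality off the first-order (KKT) optimality conditions of the log-barrier problem that defines $x_s$ in \pref{alg:falcon}. Write $F(x)=\innerp{x}{\wh{\ell}_s}+\gamma_s\sum_{i=1}^d\log\frac{1}{x_i}$. Since $F$ blows up at the boundary of $\Delta_d$, the minimizer $x_s$ lies in the relative interior, so every $x_{s,i}>0$. Only the equality constraint $\sum_i x_i=1$ is active, and stationarity gives a multiplier $\lambda\in\R$ with
\begin{equation*}
\wh{\ell}_{s,i}-\frac{\gamma_s}{x_{s,i}}=\lambda\quad\text{for all } i\in[d].
\end{equation*}

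Next, identify $\lambda$. Multiply each equation by $x_{s,i}$ and sum over $i$; using $\sum_i x_{s,i}=1$ this yields
\begin{equation*}
\innerp{x_s}{\wh{\ell}_s}-d\gamma_s=\lambda .
\end{equation*}
Substitute this back into the $i$-th stationarity equation to get
\begin{equation*}
\wh{\ell}_{s,i}-\frac{\gamma_s}{x_{s,i}}=\innerp{x_s}{\wh{\ell}_s}-d\gamma_s .
\end{equation*}
Rearranging, $\innerp{x_s}{\wh{\ell}_s}-\wh{\ell}_{s,i}=d\gamma_s-\frac{\gamma_s}{x_{s,i}}$. So \pref{eq:est-reg} in fact holds with equality for every $i$.

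Two sanity checks follow immediately. Taking $i=\argmin_k \wh{\ell}_{s,k}$ and dropping the nonpositive term $-\gamma_s/x_{s,i}$ recovers \pref{eq:low-reg}. Rearranging the identity and using $\innerp{x_s}{\wh{\ell}_s}-\wh{\ell}_{s,i}\ge -\wh{\Reg}_s(e_i)$ recovers \pref{eq:low-var}. This confirms the claim that \pref{lem:estimated-reg-bound} is stronger than \pref{lem:low-reg-low-var}.

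The only step needing care is justifying the interior-solution and Lagrange argument. $F$ is continuous and strictly convex on the interior and tends to $+\infty$ at the boundary, so a unique interior minimizer exists. Because the only active constraint is affine, the KKT conditions are both necessary and sufficient there. I do not expect a genuine obstacle beyond this.
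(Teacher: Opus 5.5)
Your proof is correct and is essentially the paper's argument: the paper reads the bound off the first-order optimality condition $\langle e_i - x_s, \widehat{\ell}_s - \gamma_s/x_s\rangle \ge 0$, and you read it off the equivalent Lagrange stationarity condition at the interior minimizer. Because $x_s$ is interior, the variational inequality is tight in every feasible direction, so your additional observation that the bound holds with equality is also correct.
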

This is obtained by simply applying the first-order optimality condition to the optimization objective that defines $x_s$.
It is stronger than \pref{lem:low-reg-low-var} since it trivially implies \pref{eq:low-reg} by dropping the negative term $- \frac{\gamma_s}{x_{s,i}}$,
and it also implies \pref{eq:low-var} by rearranging terms and realizing $ \wh{\ell}_{s,i} - \innerp{x_s}{\wh{\ell}_s} \leq \wh{\ell}_{s,i} - \min_{j\in[d]} \wh{\ell}_{s,j} = \wh{\Reg}_s(e_i)$.
This new lemma is useful since it asserts that 
the algorithm's estimated regret compared against a particular action is much smaller when this action is selected with a low probability, which in turn compensates the large estimation variance for this action.
Indeed, consider bounding the true regret of $x_s$ as follows (with $i^\star\in\argmin_{i\in[d]}\ell_i$ being an optimal action):
\begin{align*}
\Reg(x_s) 
&= \innerp{x_s}{\wh{\ell}_s}-\wh{\ell}_{s,i^\star} + \innerp{x_s - e_{i^\star}}{\ell-\wh{\ell}_s} 
\leq d\gamma_s - \frac{\gamma_s}{x_{s,i^\star}}
+ \frac{\beta_s}{\sqrt{x_{s-1,i^\star}}} + \sum_{i=1}^d \frac{\beta_s x_{s,i}}{\sqrt{x_{s-1,i}}},
\end{align*}
where the inequality uses \pref{lem:estimated-reg-bound} and standard concentration to bound $\abs{\wh{\ell}_{s,i}-\ell_i}$ for each $i$ by $\frac{\beta_s}{\sqrt{x_{s-1,i}}}$ with $\beta_s = \otil(1/\sqrt{2^s})$, since $\wh{\ell}_{s,i}$ is constructed using samples collected from strategy $x_{s-1}$. 
At this point, it is clear that if $x_{s}$ and $x_{s-1}$ are close enough in a multiplicative sense, that is, if $x_{s,i} \leq \alpha x_{s-1,i}$ holds for all $i$ and some $\alpha = \order(\text{poly}(d))$,
then we can continue with
\[
\Reg(x_s) \leq d\gamma_s - \frac{\gamma_s}{\alpha x_{s-1,i^\star}}
+ \frac{\beta_s}{\sqrt{x_{s-1,i^\star}}} + \sum_{i=1}^d \frac{\beta_s \sqrt{\alpha}x_{s,i}}{\sqrt{x_{s,i}}}
\leq d\gamma_s + \frac{\alpha\beta_s^2}{4\gamma_s} + \beta_s \sqrt{\alpha d},
\]
where we use AM-GM inequality: $- \frac{\gamma_s}{\alpha x_{s-1,i^\star}}
+ \frac{\beta_s}{\sqrt{x_{s-1,i^\star}}} \leq \frac{\alpha\beta_s^2}{4\gamma_s}$, illustrating again the importance of the negative term from \pref{eq:est-reg} in compensating the potentially large estimation error $\frac{\beta_s}{\sqrt{x_{s-1,i^\star}}}$.
This concludes the $t^{-\frac{1}{2}}$ last-iterate convergence after setting $\gamma_s$ appropriately. 
It thus remains to prove the multiplicative stability, which, thanks to the log-barrier regularization, indeed holds with $\alpha=\order(d)$; see \pref{lem:mult-stability} in the appendix.
Although it is known that log-barrier regularization ensures multiplicative stability in the FTRL framework, as first shown by~\citet{wei2018more} and later utilized in many other works, our proof is different and needs to resolve several new challenges, including handling the time-varying learning rate $\gamma_s$ and understanding how $\wh{\ell}_s$ deviates from $\wh{\ell}_{s-1}$, the latter of which itself requires using multiplicative stability from previous epochs.
 
Formally, we show a $\otil(d/\sqrt{t})$ last-iterate convergence rate for \pref{alg:falcon} using this new analysis (see \pref{prop:falcon-convergence}), slightly worse than the $\otil(\sqrt{d/t})$ rate obtained from the analysis discussed in \pref{sec:obstacle}.
However, the benefit is that this new analysis immediately generalizes to games (unlike the one in \pref{sec:obstacle} that provably cannot).

\paragraph{Generalization to games} 

The generalization starts with the following analogue of \pref{lem:estimated-reg-bound}. 

\begin{restatable}
{lemma}{dgapEstGame}
\label{lem:dgapEstGame}
  For every epoch $s\geq 1$, \pref{alg:game_falcon} satisfies: 
  \begin{align}\label{eqn:dgapEstGame}
      x_s^\top \wh{A}_{s}e_j - e_i^\top \wh{A}_{s}y_s \leq 2d\gamma_s  - \frac{\gamma_s}{x_{s,i}} - \frac{\gamma_s}{y_{s,j}}, \;\forall i, j\in [d].
  \end{align}
\end{restatable}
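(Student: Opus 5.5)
The plan is to apply first-order optimality conditions to the saddle point $(x_s,y_s)$ of $\Phi_s$, once in $x$ and once in $y$, and then add the two resulting inequalities, exactly as \pref{lem:estimated-reg-bound} does in the single-player case.

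Fix $y=y_s$. Then $x_s$ minimizes the strictly convex function $f(x)=x^\top\wh{A}_sy_s+\gamma_s\sum_i\log\frac1{x_i}$ over $\Delta_d$. The log-barrier forces $x_s$ into the relative interior, so the KKT conditions give a multiplier $\lambda$ with
\[
(\wh{A}_sy_s)_i-\frac{\gamma_s}{x_{s,i}}=\lambda \quad\text{for all } i.
\]
Multiplying by $x_{s,i}$ and summing over $i$ gives $\lambda=x_s^\top\wh{A}_sy_s-d\gamma_s$. Hence, for every $i$,
\[
x_s^\top\wh{A}_sy_s-e_i^\top\wh{A}_sy_s=d\gamma_s-\frac{\gamma_s}{x_{s,i}}.
\]
The same identity also follows from the variational inequality $\langle\nabla f(x_s),e_i-x_s\rangle\ge0$, which holds with equality here because $x_s$ is interior.

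Symmetrically, fix $x=x_s$. Then $y_s$ maximizes $x_s^\top\wh{A}_sy-\gamma_s\sum_j\log\frac1{y_j}$. Stationarity gives $(x_s^\top\wh{A}_s)_j+\frac{\gamma_s}{y_{s,j}}=\mu$, and weighting by $y_{s,j}$ and summing gives $\mu=x_s^\top\wh{A}_sy_s+d\gamma_s$. Hence, for every $j$,
\[
x_s^\top\wh{A}_se_j-x_s^\top\wh{A}_sy_s=d\gamma_s-\frac{\gamma_s}{y_{s,j}}.
\]

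Adding the two identities cancels $x_s^\top\wh{A}_sy_s$ and yields \pref{eqn:dgapEstGame}, in fact with equality.

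The only point that needs care is justifying that the saddle point's $x_s$ and $y_s$ are exactly the best responses to each other in the regularized game. This is the standard characterization of a saddle point of a strongly convex–strongly concave function, which exists and is unique as noted in the text. The other point is interiority, which follows because the barrier term diverges at the boundary. Neither step should pose a real obstacle.
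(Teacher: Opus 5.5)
Your proposal is correct and follows essentially the same route as the paper: use the saddle-point property to treat $x_s$ and $y_s$ as mutual best responses in the regularized game, apply first-order optimality for each player, test against $e_i$ and $e_j$, and add the two results. The only difference is cosmetic. You use interiority to write KKT stationarity as an equality, so the lemma in fact holds with equality. The paper instead invokes the variational inequality $\langle \nabla_x \Phi_s(x_s,y_s), x - x_s\rangle \ge 0$ at $x=e_i$ (and its analogue for $y$ at $y=e_j$), which yields the stated inequality directly.
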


This lemma shows that the lower the sampling probability of an action, the smaller the increase in payoff one can obtain by deviating unilaterally to this action.
We note that Eq.~(2) of~\citet{saha2025efficient} is  close to our \pref{eqn:dgapEstGame}, but theirs involves a joint distribution over all action pairs and is thus fundamentally different from ours.
Next, we generalize the multiplicative stability as follows.

\begin{restatable}[Multiplicative stability]{lemma}{stabGame_concise}
\label{lem:stabGame_concise}
    With probability at least $1-\delta$, \pref{alg:game_falcon} with $\gamma_s$ set to the value stated in \pref{thm:dgapGame} ensures
    for all $s\geq 2$ and $i,j\in[d]$,
    $x_{s,i}\leq 16d x_{s-1,i}$, 
    $y_{s,j}\leq 16d y_{s-1,j}$.
\end{restatable}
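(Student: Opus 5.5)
We prove the statement by induction on $s$, carrying along the high-probability concentration event. First we define the good event $\mathcal{E}$: for every epoch $s\ge 2$ and every pair $(i,j)$,
\[
\abs{\wh{A}_{s,ij}-A_{ij}} \le \frac{\beta_s}{\sqrt{x_{s-1,i}y_{s-1,j}}},
\qquad \beta_s = \order\rbr{2^{-s/2}\sqrt{\log(8d^2s^2/\delta)}}.
\]
Since epoch $s-1$ has $2^{s-2}$ rounds and $(i,j)$ is sampled with probability $x_{s-1,i}y_{s-1,j}$, this follows from Bernstein/Freedman applied to the count $n_{s-1,ij}$ and to the empirical mean. When the expected count is below a constant times the log factor, the right-hand side already exceeds $2$, so the bound is trivial. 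A union bound over $d^2$ pairs and over epochs (weights $1/s^2$) gives $\Pr(\mathcal{E})\ge 1-\delta$. All remaining steps are deterministic on $\mathcal{E}$.

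Next comes the structural step. The first-order optimality conditions for the saddle point $(x_s,y_s)$ give the interior characterization
\[
x_{s,i} = \frac{\gamma_s}{(\wh{A}_s y_s)_i + \lambda_s}, \qquad y_{s,j} = \frac{\gamma_s}{\mu_s - (\wh{A}_s^\top x_s)_j}
\]
for normalizing multipliers $\lambda_s,\mu_s$. This is the same computation behind \pref{lem:dgapEstGame}. So for each $i$ we need the lower bound
\[
(\wh{A}_{s-1}y_{s-1})_i+\lambda_{s-1} \;\gtrsim\; \frac{1}{16d}\cdot\frac{\gamma_{s-1}}{\gamma_s}\rbr{(\wh{A}_s y_s)_i + \lambda_s}.
\]
Note $\gamma_{s-1}/\gamma_s=\sqrt2\cdot$(log ratio), which is $\Theta(1)$, so the time-varying learning rate costs only a constant.

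The strategy is to compare both sides against an ``oracle'' quantity. We write
\[
\frac{\gamma_s}{x_{s,i}} = (\wh{A}_s y_s)_i - x_s^\top\wh{A}_s y_s + d\gamma_s - \gamma_s\sum_j y_{s,j}\cdot(\cdots),
\]
or more cleanly use \pref{lem:dgapEstGame} in its exact (equality) form:
\[
\frac{\gamma_s}{x_{s,i}} = d\gamma_s + e_i^\top\wh{A}_s y_s - x_s^\top\wh{A}_s y_s.
\]
This holds with equality, since $\sum_i x_{s,i}\cdot(\cdot)$ fixes $\lambda_s$. Thus $\gamma_s/x_{s,i} - d\gamma_s$ is the estimated unilateral gain $g_s(i) = e_i^\top\wh{A}_sy_s - x_s^\top\wh{A}_sy_s$. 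It then suffices to show
\[
g_s(i) \;\ge\; \tfrac{1}{c}\, g_{s-1}(i) - \order(d\gamma_s)
\]
for an appropriate constant $c$. Combined with $\gamma_{s}/\gamma_{s-1}=\Theta(1)$, this yields $1/x_{s,i}\ge \frac{1}{16d\,x_{s-1,i}}$.

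To compare $g_s(i)$ with $g_{s-1}(i)$, we replace $\wh{A}_s$ and $\wh{A}_{s-1}$ by $A$. The errors are bounded on $\mathcal{E}$ by terms like $\sum_j y_{s,j}\beta_s/\sqrt{x_{s-1,i}y_{s-1,j}}$. These are controlled using the induction hypothesis $y_{s-1}\le 16d\,y_{s-2}$ (and the current-step claim where needed, via a continuity/bootstrapping argument) together with AM-GM against the available $\gamma/x$ terms, exactly as in the warm-up. This is where $\gamma_s \gtrsim d\beta_s$ (the constant $128d$) is used, so that $\beta_s/\sqrt{x}\le \frac{\gamma_s}{8x}+\order(\beta_s^2/\gamma_s)$.

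The remaining difficulty, and the main obstacle, is that $y_s$ and $x_s$ also move: $g_s(i)$ is evaluated at the new opponent strategy and the new value $x_s^\top\wh{A}_sy_s$. I would handle this by using saddle-point optimality of $\Phi_s$ and $\Phi_{s-1}$. Evaluating each at the other's equilibrium gives, after cancellation, a bound on the log-barrier Bregman divergence between $(x_s,y_s)$ and $(x_{s-1},y_{s-1})$ in terms of the matrix error. That divergence bound yields, entrywise, $y_{s,j}\le \order(d)\,y_{s-1,j}$ and $x_{s,i}$ similarly, because a log-barrier divergence of $\order(1)$ forces the ratio to be $\order(1)$ per coordinate, up to the $d$ factor from $d\gamma_s$. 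I expect this bootstrapping to be the most delicate step. The error term itself depends on $\sqrt{x_{s-1}y_{s-1}}$ weighted by the new strategies, so I would first prove the crude bound on the new strategies via the divergence, then plug it back in to get the clean $16d$ constant.
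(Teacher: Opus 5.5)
Your concentration event, the induction framing, and the exact identity $\gamma_s/x_{s,i}=d\gamma_s+g_s(i)$ are fine. The object you reach for at the end is exactly what the paper uses: the symmetrized log-barrier Bregman divergence between consecutive equilibria, obtained by testing each epoch's optimality condition at the other epoch's equilibrium ($Q_s=\sum_i (x_{s+1,i}-x_{s,i})^2/(x_{s+1,i}x_{s,i})$ and its $y$-analogue $R_s$). But there is a genuine gap. The route you develop cannot be carried out, and the step that actually proves the lemma is left unspecified and described circularly.

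Because $\gamma_s/x_{s,i}=d\gamma_s+g_s(i)$ is an equality, your target $g_s(i)\ge g_{s-1}(i)/c-\order(d\gamma_s)$ is just a restatement of the lemma. The exact equivalent has $c=16d\gamma_{s-1}/\gamma_s$, not an absolute constant; a constant $c$ would be a strictly stronger claim. Replacing $\wh A_s,\wh A_{s-1}$ by $A$ to prove it leaves $e_i^\top A(y_s-y_{s-1})$ and $x_s^\top Ay_s-x_{s-1}^\top Ay_{s-1}$. These are $\Theta(1)$ in general, and no multiplicative bound $y_{s,j}\le\order(d)\,y_{s-1,j}$ makes them $\order(d\gamma_s)$: if $y_{s-1}$ is near uniform, such a bound allows essentially any $y_s$. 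The error term $\sum_j y_{s,j}\beta_s/\sqrt{x_{s-1,i}y_{s-1,j}}$ also needs the current-step ratio $y_s/y_{s-1}$, which is what you are trying to prove. The ``continuity/bootstrapping'' you invoke is never specified and would not remove the $\Theta(1)$ terms anyway.

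What is missing is the mechanism that makes the divergence bound close by itself, with no a priori information about the new iterate, so that neither bootstrapping nor a second ``plug back in'' stage is needed. In the paper's indexing, with $\rho_s=\gamma_s/\gamma_{s+1}$: adding the $x$-optimality conditions at epochs $s$ and $s+1$ gives $Q_s\le\inner{x_{s+1}-x_s,\wh A_sy_s/\gamma_s-\wh A_{s+1}y_{s+1}/\gamma_{s+1}}$. This splits into three pieces: $\frac{\rho_s}{\gamma_s}\inner{x_{s+1}-x_s,\wh A_{s+1}(y_s-y_{s+1})}$, a drift term $\frac{1-\rho_s}{\gamma_s}\inner{x_{s+1}-x_s,\wh A_sy_s}$, and $\frac{\rho_s}{\gamma_s}\inner{x_{s+1}-x_s,(\wh A_s-\wh A_{s+1})y_s}$; $R_s$ splits symmetrically. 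The first pieces cancel when the $x$- and $y$-bounds are added, which is presumably the cancellation you have in mind. Two further ingredients are absent from your sketch.

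\textbf{Drift term.} Naively it is of order $(\rho_s-1)/\gamma_s$, far too large. It is bounded by $(\rho_s-1)d$ only by using the epoch-$s$ first-order condition to replace $\wh A_sy_s/\gamma_s$ by $1/x_s$, together with $\rho_s\ge1$. Your remark that the varying rate ``costs only a constant'' does not address this.

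\textbf{Estimator term.} Here the opponent is weighted by the \emph{old} strategy $y_s$. The induction hypothesis and the triangle inequality through $A$ give $|\wh A_{s,ij}-\wh A_{s+1,ij}|\le\varepsilon_s/\sqrt{x_{s,i}y_{s,j}}$ with $\varepsilon_s=(1+\alpha)\max\{\beta_s,\beta_{s+1}\}$. Cauchy--Schwarz with $\sum_i x_{s+1,i}=1$ then gives $\sum_i|x_{s+1,i}-x_{s,i}|/\sqrt{x_{s,i}}\le\sqrt{Q_s}$. So the term is at most $\varepsilon_s\sqrt{dQ_s}$, and the new iterate is absorbed into $Q_s$ itself.

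This yields, for $S_s=Q_s+R_s$, the self-bounding inequality $S_s\le 2(\rho_s-1)d+\frac{\rho_s\varepsilon_s\sqrt d}{\gamma_s}\sqrt{2S_s}$. Hence $S_s\le 4(\rho_s-1)d+2\rho_s^2\varepsilon_s^2d/\gamma_s^2\le 7d\le(\sqrt{16d}-1/\sqrt{16d})^2$ when $\gamma_s=16d\beta_s$. Since $u+u^{-1}-2\le S_s$ for $u=x_{s+1,i}/x_{s,i}$, this gives $u\le16d$ directly. Your concern that the error is ``weighted by the new strategies'' is resolved by this self-bounding step, not by a crude bound followed by refinement.
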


The proof of this lemma is the most technical part of our analysis.
It not only needs to address all the issues mentioned earlier in the single-player case, but also needs to take into account the fact that $x_s$ is the solution of an optimization defined in terms of $y_s$ (and vice versa) and both of them are changing over time.
With these two key lemmas, the proof of our main result \pref{thm:dgapGame} can be proven following  similar ideas outlined earlier for the single-player case; see \pref{app:falcon_game} for details.

%% file: sections/experiments.tex
\section{Experiments}\label{sec:exp}
\paragraph{Experimental setup}
We evaluate the performance of PMO-LB (\pref{alg:game_falcon}) against a set of baselines on two security games from the benchmark of~\citet{krever2025guard} for $10^7$ rounds.
The two games are of size $61 \times 21$ and $39 \times 35$ respectively.
%
The five baselines we compared against are: 
\citet[Algorithm~1]{cai2023uncoupled}, 
\citet[Algorithm~2]{cai2025average},
\citet[Algorithm~1]{fiegel2025harder},
\citet[Algorithm~1]{fiegel2026optimal},
and the naive baseline (\pref{alg:ne-estimate}) analyzed in \pref{app:NEUniform}.
Note that only our \pref{alg:game_falcon} and \pref{alg:ne-estimate} utilize access to the opponent’s actions, whereas the remaining algorithms do not.

\paragraph{Implementation details}
We generate the two game matrices using the GUARD Green Security Game environment~\citep{krever2025guard}.  In both instances, the geographic region is discretized into a $7 \times 7$ grid, and target values are generated by the centroid-based method with $6$ clusters. The first matrix uses $5$ stationary attackers, one mobile defender, home base $(2.15,16.02)$, and horizon $5$, whereas the second matrix uses $4$ stationary attackers, one mobile defender, home base $(2.27,16.0)$, and horizon $5$. 

Hyperparameters of all algorithms are chosen according to theoretical requirements  where available, and otherwise tuned to ensure stable performance. 
Results are averaged over $10$ independent runs for each game.
All experiments were conducted on a Linux server running Ubuntu 24.04.3 LTS. The server was equipped with two Intel Xeon Gold 6530 processors, providing 64 physical CPU cores and 128 logical threads, and 1.0 TiB of RAM. The code was implemented in Python 3.13.12.

\begin{figure}[htbp]
    \centering
    \begin{subfigure}[b]{0.47\textwidth}
        \centering
        \includegraphics[width=\textwidth]{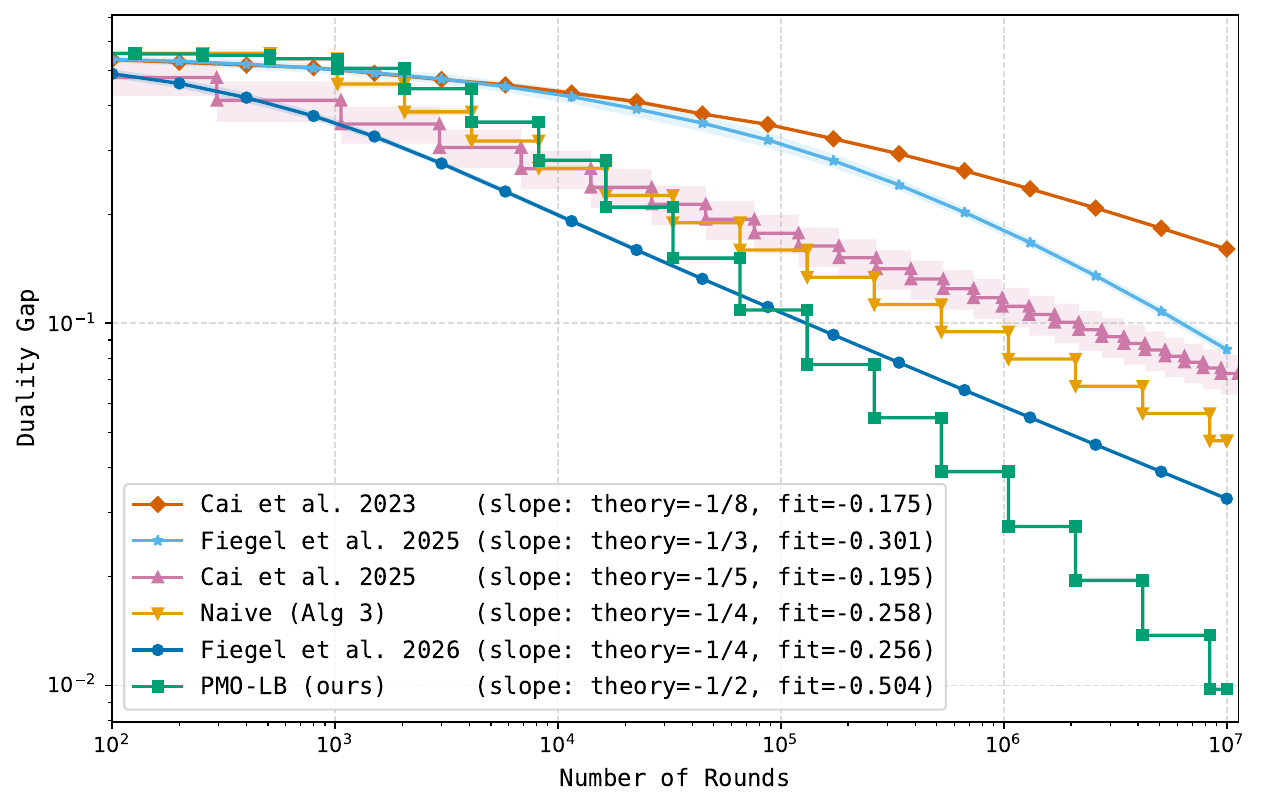}
        \caption{Plot for a security game of size $61\times 21$}
        \label{fig:plot-61x21}
    \end{subfigure}
    \hfill
    \begin{subfigure}[b]{0.47\textwidth}
        \centering
        \includegraphics[width=\textwidth]{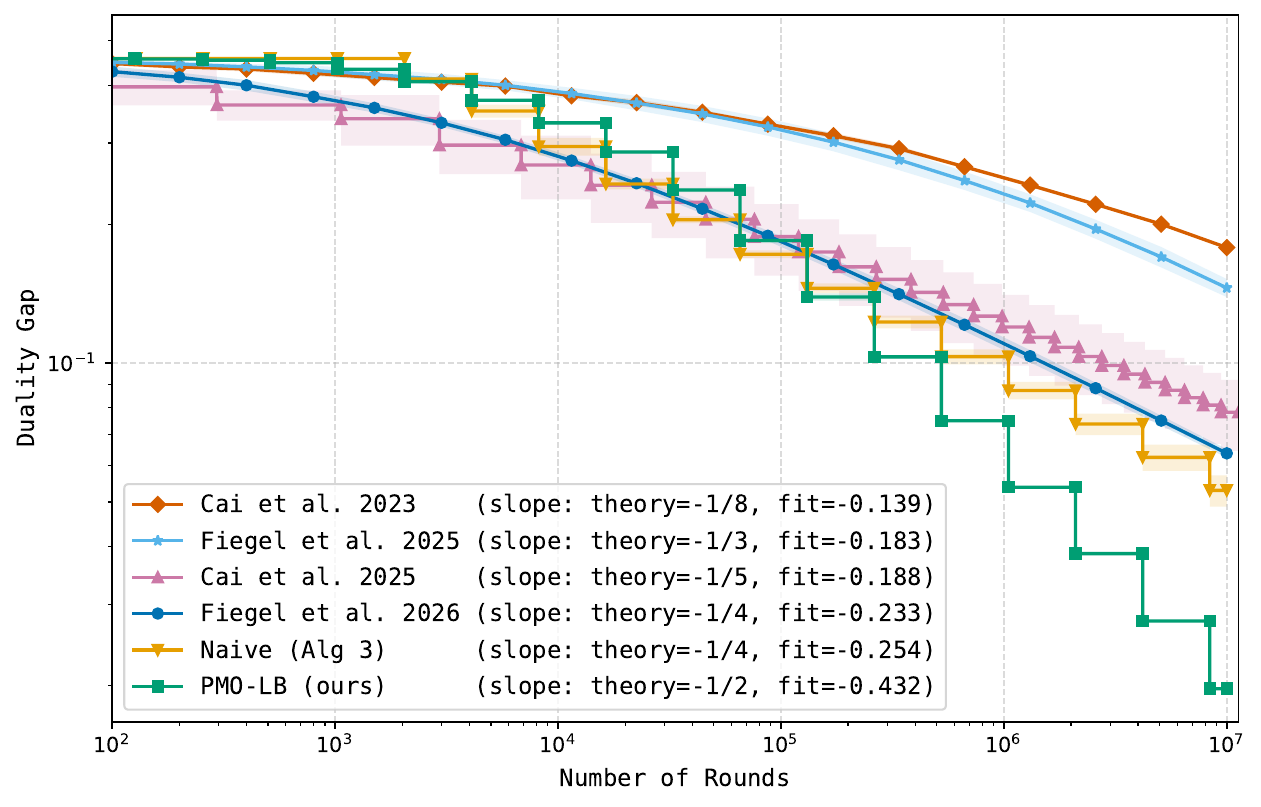}
        \caption{Plot for a security game of size $39\times 35$}
        \label{fig:plot-39x35}
    \end{subfigure}
    \caption{Plots of duality gap vs. number of rounds for PMO-LB and five baseline algorithms}
    \label{fig:plots}
\end{figure}

 \paragraph{Results and discussion}
 \pref{fig:plots} compares the duality gap as a function of the number of rounds across all algorithms, in log-log scale. 
 We also include in the legend the slope (that is, the exponent of the convergence rate) of each plot as predicted by the theory and as fitted based on the actual results starting from $t=10^4$.
 For most algorithms, the predicted slope indeed aligns with the empirical one.
 More importantly, the figure shows that 
 our PMO-LB algorithm consistently outperforms all other baselines.


%% file: appendix/omitted_proofs.tex
\newpage
\section{Omitted Details for Multi-Armed Bandits}\label{app:alg}

In this section, we analyze the single-agent setting from \pref{sec:alg} and formally state the convergence guarantee for FALCON under our approach.

\begin{proposition}[Last-iterate convergence of FALCON]\label{prop:falcon-convergence}
    For any fixed $\delta \in (0,1)$,
    \pref{alg:falcon} with $\gamma_s = 40\cdot 2^{-s/2}\sqrt{\log(8ds^2/\delta)}$ guarantees that with probability at least $1-\delta$,  
    $\Reg(x_t)=\order\rbr{d\sqrt{\frac{\log(d\log t/\delta)}{t}}}$ for all $t\geq 1$, where $x_t$ denotes the strategy played in round $t$.
\end{proposition}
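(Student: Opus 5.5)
The proof follows the warm-up in \pref{sec:alg} and has three ingredients: the first-order bound of \pref{lem:estimated-reg-bound}, a high-probability concentration event, and multiplicative stability $x_{s,i}\le \alpha x_{s-1,i}$ with $\alpha=\order(d)$. Once these are in place, the regret decomposition and AM-GM finish the argument.

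\textbf{Concentration event.} Fix epoch $s\ge2$. The estimate $\wh{\ell}_{s,i}$ averages $n_{s-1,i}$ samples gathered under $x_{s-1}$ over $2^{s-2}$ rounds.
\begin{itemize}
\item A multiplicative Chernoff bound gives $n_{s-1,i}\ge \tfrac12 2^{s-2}x_{s-1,i}$ whenever $2^{s-2}x_{s-1,i}\gtrsim \log(ds^2/\delta)$.
\item Hoeffding, conditional on the counts, then gives $|\wh{\ell}_{s,i}-\ell_i|\le \beta_s/\sqrt{x_{s-1,i}}$ with $\beta_s=\order(\sqrt{\log(ds^2/\delta)/2^s})$.
\item If $x_{s-1,i}$ is below the Chernoff threshold, then $\beta_s/\sqrt{x_{s-1,i}}\ge 2$ up to constants, so the bound holds trivially because losses lie in $[-1,1]$. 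This also covers $n=0$.
\end{itemize}
A union bound over $i$ and $s$, with failure budget $\delta/(2ds^2)$ each, makes the event hold for all $s,i$ with probability $1-\delta$. Epoch $1$ is trivial since $\Reg\le 2$.

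\textbf{Multiplicative stability (main obstacle).} On this event I would prove by induction on $s$ that $x_{s,i}\le 4d\,x_{s-1,i}$. The KKT conditions for the log-barrier problem give
\[
x_{s,i}=\frac{\gamma_s}{\wh{\ell}_{s,i}-\lambda_s},
\]
with $\lambda_s$ the normalizing multiplier, so that $\lambda_s<\min_i\wh{\ell}_{s,i}$. Hence
\[
\frac{x_{s,i}}{x_{s-1,i}}=\frac{\gamma_s}{\gamma_{s-1}}\cdot\frac{\wh{\ell}_{s-1,i}-\lambda_{s-1}}{\wh{\ell}_{s,i}-\lambda_s}.
\]
The task is to bound $\wh{\ell}_{s-1,i}-\lambda_{s-1}$ by a multiple of $\wh{\ell}_{s,i}-\lambda_s$ plus $\order(d\gamma_s)$. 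Two facts feed into this.
\begin{itemize}
\item Both estimates sit within $\beta/\sqrt{x_{\cdot,i}}$ of $\ell_i$. By the induction hypothesis $x_{s-2,i}\gtrsim x_{s-1,i}/d$, so their difference is at most $\order(\beta_s\sqrt{d/x_{s-1,i}})$.
\item The gap $\wh{\ell}_{s-1,i}-\lambda_{s-1}=\gamma_{s-1}/x_{s-1,i}$ is itself large when $x_{s-1,i}$ is small.
\end{itemize}
AM-GM then absorbs the error term into half the gap, provided $\gamma_s\ge c\beta_s\sqrt d$. The stated constant $40$ should suffice only because this argument also controls the multipliers.

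The delicate part is comparing $\lambda_s$ with $\lambda_{s-1}$. I would compare each with the true-loss proxy $\min_j\ell_j$, using the normalization $\sum_i\gamma/(\wh{\ell}_i-\lambda)=1$. This forces $\min_j\wh{\ell}_j-d\gamma\le\lambda\le\min_j\wh{\ell}_j-\gamma$. I then bound $\min_j\wh{\ell}_{s,j}$ against $\min_j\wh{\ell}_{s-1,j}$ using the same perturbation bound on the minimizing coordinates, which have large $x$. Everything holds up to a $d$ factor, and this is where $\alpha=\order(d)$ arises.

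\textbf{Conclusion.} With stability and concentration in hand, apply the displayed chain from the warm-up:
\[
\Reg(x_s)\le d\gamma_s+\frac{\alpha\beta_s^2}{4\gamma_s}+\beta_s\sqrt{\alpha d}=\order\!\left(d\sqrt{\frac{\log(ds^2/\delta)}{2^s}}\right).
\]
For round $t$ in epoch $s$ we have $2^s\approx t$ and $s\approx\log t$, which gives $\Reg(x_t)=\order\big(d\sqrt{\log(d\log t/\delta)/t}\big)$.
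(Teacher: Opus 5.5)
Your architecture (concentration event, the first-order bound of \pref{lem:estimated-reg-bound}, multiplicative stability, then the warm-up chain with AM-GM) matches the paper's. The genuine difference is the stability proof.

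The paper never touches the multiplier. It adds the first-order optimality conditions of consecutive epochs to get $Q_s=\sum_i (x_{s+1,i}-x_{s,i})^2/(x_{s+1,i}x_{s,i})\le\langle x_{s+1}-x_s,\wh\ell_s/\gamma_s-\wh\ell_{s+1}/\gamma_{s+1}\rangle$. It then peels off a $(\rho_s-1)d$ term for the learning-rate change and an $\epsilon_s\sqrt{Q_s}$ term for the estimator change, and solves the resulting quadratic inequality to get $\alpha=16d$.

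Your coordinatewise comparison through $x_{s,i}=\gamma_s/(\wh\ell_{s,i}-\lambda_s)$ is valid and more explicit. You only need the one-sided bound $\lambda_s-\lambda_{s-1}\le \wh\ell_{s,j^\ast}-\wh\ell_{s-1,j^\ast}+d\gamma_{s-1}-\gamma_s$, where $j^\ast$ minimizes $\wh\ell_{s-1}$ and hence $x_{s-1,j^\ast}\ge 1/d$. The detour through $\min_j\ell_j$ is unnecessary, and the true best arm may carry tiny mass anyway.

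What the paper's route buys is the extension to games. There $x_{s,i}=\gamma_s/((\wh A_s y_s)_i-\lambda_s)$, so your comparison would have to control $\wh A_s y_s-\wh A_{s-1}y_{s-1}$, which involves the very change in $y$ you are trying to bound. The summed-optimality argument instead makes these cross terms cancel exactly (\pref{lem:stabGame}).

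\textbf{One correction.} You do not need $\gamma_s\ge c\beta_s\sqrt d$. If you did, your argument would not cover the stated $\gamma_s=5\beta_s$ (with $\beta_s$ as in \pref{def:closeMAB}), and your final bound would degrade to $d^{3/2}\sqrt{\log(d\log t/\delta)/t}$.

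What actually must be checked is that the induction closes with the \emph{same} $\alpha$, since the AM-GM leftover depends on $\alpha$. Set $\rho=\gamma_{s-1}/\gamma_s\in[1,\sqrt2]$, $\epsilon=(\rho\sqrt\alpha+1)\beta_s$, $g_i=\gamma_{s-1}/x_{s-1,i}$ and $h_i=\gamma_s/x_{s,i}\ge\gamma_s$. Your steps give $g_i\le 2h_i+\epsilon^2/\gamma_{s-1}+2\epsilon\sqrt d+2d\gamma_{s-1}-2\gamma_s$, hence
\[
\frac{x_{s,i}}{x_{s-1,i}}=\frac{g_i}{\rho h_i}\le\frac{(\sqrt\alpha+1)^2\beta_s^2}{\gamma_s^2}+\frac{2\sqrt d(\sqrt\alpha+1)\beta_s}{\gamma_s}+2d .
\]
The $\alpha$-dependent term has coefficient $\beta_s^2/\gamma_s^2=1/25<1$, and this is exactly where the constant $40$ enters. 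With $\alpha=4d$ the right side is at most $2.96d+0.56\sqrt d+0.04\le 4d$, so your claimed $4d$-stability does hold. You should state this verification explicitly rather than appealing to ``everything holds up to a $d$ factor.''
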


We begin by showing a high-probability bound on the estimation error of each arm’s loss under the epoch-wise sampling scheme of \pref{alg:falcon}.

\begin{definition}\label{def:closeMAB}
     Define $\calE_1$ to be the event: $\calE_1\triangleq \cbr{ \forall s\ge 2, \forall i\in [d], \abs{\wh{\ell}_{s,i}-\ell_i} \le \frac{\beta_s}{\sqrt{x_{s-1,i}}}}$ where $\beta_s = \sqrt{\frac{16\log(8ds^2/\delta)}{2^{s-2}}}$.
\end{definition}

\begin{lemma}[Estimation error] \label{lem:estimation-error}
    \pref{alg:falcon} guarantees that $\calE_1$ holds with probability at least $1-\delta$.
\end{lemma}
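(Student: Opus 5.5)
We need $\calE_1$ to hold with probability at least $1-\delta$. The plan is to condition on the past and apply a concentration inequality to each arm and epoch, then take a union bound. Fix an epoch $s\ge 2$ and an arm $i$. Epoch $s-1$ has $N=2^{s-2}$ rounds. Conditional on the history up to the start of epoch $s-1$, the strategy $x_{s-1}$ is fixed, and the pairs $(i_t,r_t)$ within the epoch are i.i.d. The target is a bound of the form $\beta_s/\sqrt{x_{s-1,i}}$ holding with conditional probability at least $1-\delta/(4ds^2)$. Summing this over $i\in[d]$ and $s\ge 2$ gives total failure probability at most $\sum_s \delta/(4s^2)\le \delta$. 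Since the conditional bound is uniform over histories, the unconditional statement follows by the tower rule.

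For a fixed $(s,i)$, write $p=x_{s-1,i}$ and $L=\log(8ds^2/\delta)$. Then $n_{s-1,i}\sim\mathrm{Bin}(N,p)$. I split into two cases.

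\textbf{Case 1: $Np$ is small, say $Np\le 8L$.} Then $\beta_s/\sqrt p=\sqrt{16L/(Np)}\ge \sqrt 2$. This is already at least $2\ge|\wh\ell_{s,i}-\ell_i|$, because $\wh\ell_{s,i}\in[-1,1]$ (it is either $0$ or an average of losses in $[-1,1]$) and $\ell_i\in[-1,1]$. So the bound holds deterministically.

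\textbf{Case 2: $Np>8L$.} First use a multiplicative Chernoff bound, $\Pr[n_{s-1,i}<Np/2]\le e^{-Np/8}\le e^{-L}$, to get $n_{s-1,i}\ge Np/2$ with high probability. Next, conditional on the count $n=n_{s-1,i}$, the rewards collected on arm $i$ are i.i.d. in $[-1,1]$ with mean $\ell_i$. This holds because the arm draws are independent of the noise, so conditioning on which rounds chose arm $i$ leaves those rewards i.i.d. Hoeffding's inequality then gives $|\wh\ell_{s,i}-\ell_i|\le\sqrt{2L'/n}$ with probability at least $1-2e^{-L'}$. On the event $n\ge Np/2$, this is at most $\sqrt{4L'/(Np)}$. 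Taking $L'$ of order $L$, this is at most $\sqrt{16L/(Np)}=\beta_s/\sqrt p$, with a little room to absorb the constant $2$ in the Hoeffding tail.

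Combining the Chernoff and Hoeffding failure probabilities gives roughly $3e^{-L}=3\delta/(8ds^2)$. Whether the union-bound total stays below $\delta$ depends on how the constants are allocated. The budget $\sum_{s\ge2}1/s^2<0.65$ leaves enough slack, and if needed I will use $L'=L+\log 2$, which the factor $16$ in $\beta_s$ comfortably absorbs.

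The main obstacle I expect is the \textbf{conditioning on the random count}, since the rewards are gathered at random times. The clean way is to couple each arm with a pre-drawn i.i.d. reward stack: the $k$-th pull of arm $i$ in epoch $s-1$ receives the $k$-th reward from that stack. Then $\wh\ell_{s,i}$ is the mean of the first $n$ stack entries, and $n$ is independent of the stack. Hoeffding can therefore be applied for each fixed $n$ and averaged over $n$.
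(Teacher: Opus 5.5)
Your approach is the paper's: condition on the history before epoch $s-1$, split on whether $Np=2^{s-2}x_{s-1,i}$ is below a multiple of $L=\log(8ds^2/\delta)$, use multiplicative Chernoff for the count and Hoeffding conditional on it, then union bound over arms and epochs. Your reward-stack coupling justifies the conditioning step more cleanly than the paper does.

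However, Case~1 as written fails. From $Np\le 8L$ you get only $\beta_s/\sqrt{p}\ge\sqrt2$, and $\sqrt2$ is not at least $2$. You correctly take $r_t,\ell_i\in[-1,1]$, so $|\wh\ell_{s,i}-\ell_i|$ can be as large as $1+|\ell_i|$. That exceeds $\sqrt2$ with positive probability, for instance when $\ell_i$ is close to $1$ and the few observed losses for arm $i$ are all $-1$. So the claimed deterministic bound is false, and the regime $4L<Np\le 8L$ is covered by neither case. Lowering the Case~1 threshold to $Np\le 4L$ alone does not fix this. Case~2 would then only have $Np>4L$, and your Chernoff bound $e^{-Np/8}$ would degrade to $e^{-L/2}=\sqrt{\delta/(8ds^2)}$, far too large for the union bound.

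The repair is to rebalance the constants:
\begin{itemize}
\item Take Case~1 to be $Np\le 4L$. There $\beta_s/\sqrt p\ge 2$ does dominate the deviation.
\item In Case~2, apply the multiplicative Chernoff bound with deviation parameter $3/4$. This gives $\Pr[n_{s-1,i}\le Np/4]\le e^{-9Np/32}\le e^{-L}$.
\item On $\{n_{s-1,i}\ge Np/4\}$, apply Hoeffding for $[-1,1]$-valued variables with $L'=2L$. The radius is $\sqrt{2L'/n}\le\sqrt{16L/(Np)}=\beta_s/\sqrt p$, with failure probability $2e^{-2L}$.
\item Each $(s,i)$ then fails with probability at most $e^{-L}+2e^{-2L}\le 2e^{-L}=\delta/(4ds^2)$. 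The union bound gives at most $\sum_{s\ge2}\delta/(4s^2)<\delta$.
\end{itemize}

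The paper's own proof avoids this issue only by asserting that $\wh\ell_{s,i}$ and $\ell_i$ lie in $[0,1]$ and using Hoeffding for range $1$. That is inconsistent with its $[-1,1]$ setup. Under that convention the deviation is at most $1$, and your original Case~1 would go through. Your handling of the range is the more careful one, and it is exactly what forces the constants to be rebalanced.
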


First, we state a standard concentration inequality that will be used in the proof of the estimation error bound.

\begin{lemma}[Multiplicative Chernoff bound]\label{lem:mult-chernoff}
    Let $X = \sum_{i=1}^n X_i$, where $X_i$ is a Bernoulli random variable with mean $p_i$ and all $X_i$ are independent. Let $\mu = \E[X] = \sum_{i=1}^n p_i$. Then
    \[
        \Pr\cbr{X \le (1 - \delta)\mu} \le e^{-\mu\delta^2/2},\quad \forall~\delta\in(0,1).
    \]
\end{lemma}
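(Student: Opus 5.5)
The plan is the standard exponential-moment (Chernoff) argument applied to the lower tail, followed by an elementary scalar inequality. Fix $\delta\in(0,1)$ and a parameter $\lambda>0$ to be chosen later. Since $u\mapsto e^{-\lambda u}$ is decreasing, Markov's inequality gives
\[
\Pr\cbr{X\le(1-\delta)\mu}=\Pr\cbr{e^{-\lambda X}\ge e^{-\lambda(1-\delta)\mu}}\le e^{\lambda(1-\delta)\mu}\,\E\sbr{e^{-\lambda X}}.
\]
By independence, $\E[e^{-\lambda X}]=\prod_{i=1}^n\E[e^{-\lambda X_i}]$. Each factor equals $1-p_i(1-e^{-\lambda})$, and $1+z\le e^{z}$ bounds it by $\exp\rbr{-p_i(1-e^{-\lambda})}$. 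Multiplying over $i$ and using $\sum_i p_i=\mu$, we get
\[
\Pr\cbr{X\le(1-\delta)\mu}\le\exp\rbr{-\mu(1-e^{-\lambda})+\lambda(1-\delta)\mu}.
\]

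Next we optimize over $\lambda$. Setting the derivative of the exponent to zero gives $e^{-\lambda}=1-\delta$, i.e.\ $\lambda=\ln\frac{1}{1-\delta}>0$, which is valid since $\delta\in(0,1)$. With this choice the exponent becomes $\mu\rbr{-\delta-(1-\delta)\ln(1-\delta)}$, so
\[
\Pr\cbr{X\le(1-\delta)\mu}\le\rbr{\frac{e^{-\delta}}{(1-\delta)^{1-\delta}}}^{\mu}.
\]

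It remains to show $-\delta-(1-\delta)\ln(1-\delta)\le-\delta^2/2$ on $(0,1)$; this elementary step is the only real obstacle. Define $f(\delta)=(1-\delta)\ln(1-\delta)+\delta-\delta^2/2$, so the claim is $f\ge 0$. We have $f(0)=0$ and
\[
f'(\delta)=-\ln(1-\delta)-1+1-\delta=-\ln(1-\delta)-\delta\ge0,
\]
because $-\ln(1-\delta)=\sum_{k\ge1}\delta^k/k\ge\delta$. Hence $f$ is nondecreasing and $f\ge0$ on $[0,1)$. Substituting into the bound above gives $\Pr\cbr{X\le(1-\delta)\mu}\le e^{-\mu\delta^2/2}$, as claimed.
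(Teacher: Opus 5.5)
Your proof is correct and complete. It is the standard exponential-moment argument: Markov's inequality applied to $e^{-\lambda X}$, the bound $1+z\le e^{z}$, the choice $\lambda=\ln\frac{1}{1-\delta}$, and the elementary inequality $(1-\delta)\ln(1-\delta)\ge-\delta+\delta^2/2$; the degenerate case $\mu=0$ is also covered. The paper gives no proof of this lemma and simply quotes it as a standard concentration inequality, so your derivation is the textbook argument it implicitly relies on.
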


\begin{proof}[Proof of {\pref{lem:estimation-error}}]
    Fix $s \ge 2$. Conditional on the history prior to epoch $s-1$, the sampling distribution $x_{s-1}$ is fixed. The length of epoch $s-1$ is $2^{s-2}$, and hence the expected number of times arm $i \in [d]$ is selected in epoch $s-1$ is $2^{s-2} x_{s-1,i}$. We first show that for any $\xi_s \in (0,1)$, with probability at least $1 - \xi_s$, the inequality $|\wh{\ell}_{s,i}-\ell_i| \le \sqrt{\frac{16\log(4d/\xi_s)}{2^{s-2}x_{s-1,i}}}$ holds simultaneously for all $i \in [d]$.
    
    For any arm $i\in[d]$, if $2^{s-2}x_{s-1,i} \le 16 \log(4d/\xi_s)$, then  $\sqrt{\frac{16\log(4d/\xi_s)}{2^{s-2}x_{s-1,i}}}\ge 1$. Since both $\wh{\ell}_{s,i}$ and $\ell_i$ lie in $[0,1]$, the bound holds trivially.

    Now suppose $2^{s-2}x_{s-1,i} > 16 \log(4d/\xi_s)$. Applying the multiplicative Chernoff bound in \pref{lem:mult-chernoff}, we obtain 
    \begin{align*}
        \Pr\cbr{n_{s-1,i}<\frac{2^{s-2}x_{s-1,i}}{2}}
        \le e^{-\frac{2^{s-2}x_{s-1,i}}{8}} \le e^{-2\log(4d/\xi_s)}.
    \end{align*}

    Conditioned on $n_{s-1,i}=n\ge 2^{s-2}x_{s-1,i}/2$, the estimate $\wh{\ell}_{s,i}$ is an average of $n$ independent random variables in $[0,1]$ with mean $\ell_i$. Thus, by Hoeffding's bound,
    \begin{align*}
        \Pr\cbr{\abs{\wh{\ell}_{s,i}-\ell_i} > \sqrt{\frac{16\log(4d/\xi_s)}{2^{s-2}x_{s-1,i}}} ~\middle|~ n_{s-1,i}=n} &\le 2\exp\rbr{-2n\frac{16\log(4d/\xi_s)}{2^{s-2}x_{s-1,i}}}\\&\le 2e^{-16\log(4d/\xi_s)}.
    \end{align*}

    Combining the above bounds, for each fixed arm $i$, 
    \begin{align}\label{eqn:concentration}
        \Pr\cbr{\abs{\wh{\ell}_{s,i}-\ell_i} > \sqrt{\frac{16\log(4d/\xi_s)}{2^{s-2}x_{s-1,i}}}} \le e^{-2\log(4d/\xi_s)}+2e^{-16\log(4d/\xi_s)} \le \frac{\xi_s}{d}.
    \end{align}

    A union bound over all $d$ arms gives us
    \begin{equation}\label{eq:per-epoch-concentration}
        \Pr\cbr{\forall i\in [d], \abs{\wh{\ell}_{s,i}-\ell_i} \le \sqrt{\frac{16\log(4d/\xi_s)}{2^{s-2}x_{s-1,i}}}} \ge 1-\xi_s.
    \end{equation}

    Setting $\xi_s = \frac{\delta}{2s^2}$ in \pref{eq:per-epoch-concentration} gives that $\abs{\wh{\ell}_{s,i}-\ell_i} \le \frac{\beta_s}{\sqrt{x_{s-1,i}}}$ holds for all $i \in [d]$ with probability at least $1 - \xi_s$.

    Finally, applying a union bound over $s \ge 2$ yields a total failure probability of at most $\sum_{s=2}^{\infty}\frac{\delta}{2s^2}\le\delta$. Therefore, the event $\calE_1$ defined in \pref{def:closeMAB}  holds with probability at least $1-\delta$.
\end{proof}

Next, we prove \pref{lem:estimated-reg-bound}, which is a stronger version of \pref{lem:low-reg-low-var}.

\begin{proof}[Proof of {\pref{lem:estimated-reg-bound}}]
    The sampling strategy $x_s$ is obtained by solving the convex optimization problem $F(x)\triangleq\min_{x\in\Delta_d} \innerp{x}{\wh{\ell}_{s}}+\gamma_s\sum_{i=1}^d\log\frac{1}{x_i}$.
    With a slight abuse of notation, for $v \in \R^d$, we denote by $\frac{1}{v} \in \R^d$ the vector whose $i$-th coordinate is $\frac{1}{v_i}$. By first-order optimality, for any $x \in \Delta_d$,
    \begin{align*}
        \inner{x-x_s,\nabla F(x_s)} \ge 0\implies \inner{x-x_s,\wh{\ell}_s-\frac{\gamma_s}{x_s}}\ge 0.
    \end{align*}
    Choosing $x = e_i$ gives us the desired inequality.
\end{proof}

We next state a multiplicative stability bound for successive strategies.
\begin{lemma}[Multiplicative stability]
\label{lem:mult-stability}
    With probability at least $1-\delta$, \pref{alg:falcon} with $\gamma_s$ set to the value stated in \pref{prop:falcon-convergence} ensures
    for all $s\geq 2$ and $i\in[d]$,
    $x_{s,i}\leq 16d x_{s-1,i}$.
\end{lemma}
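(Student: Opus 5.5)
The plan is induction on $s$, carried out entirely on the event $\calE_1$ of \pref{lem:estimation-error}; that event has probability at least $1-\delta$, which gives the stated probability. The base case $s=2$ is immediate: $\wh{\ell}_1=0$ forces $x_1$ to be uniform, so $x_{2,i}\le 1 = d\,x_{1,i}\le 16d\,x_{1,i}$.

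The main tool is a two-sided description of $x_s$ from the KKT conditions of the log-barrier problem. Write $\wh{\Delta}_{s,i}\triangleq \wh{\ell}_{s,i}-\min_k\wh{\ell}_{s,k}$ for the estimated gap. Stationarity gives $x_{s,i}=\gamma_s/(\wh{\ell}_{s,i}-\lambda_s)$ for some multiplier $\lambda_s<\min_k\wh{\ell}_{s,k}$. Two constraints locate $\lambda_s$. Since $x_{s,i}\le 1$, we get $\wh{\ell}_{s,i}-\lambda_s\ge\gamma_s$. Since $\sum_i x_{s,i}=1$, some coordinate has $x_{s,i}\ge 1/d$, which gives $\min_k\wh{\ell}_{s,k}-\lambda_s\le d\gamma_s$. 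Together these yield the sandwich
\[
1+\frac{\wh{\Delta}_{s,i}}{\gamma_s}\;\le\;\frac{1}{x_{s,i}}\;\le\; d+\frac{\wh{\Delta}_{s,i}}{\gamma_s}.
\]
The upper bound is the same as \pref{eq:low-var}. The target $x_{s,i}\le 16d\,x_{s-1,i}$ is equivalent to $1/x_{s,i}\ge 1/(16d\,x_{s-1,i})$. By the sandwich (applied at $s$ on the left and at $s-1$ on the right), it suffices to show
\[
1+\frac{\wh{\Delta}_{s,i}}{\gamma_s}\ \ge\ \frac{d+\wh{\Delta}_{s-1,i}/\gamma_{s-1}}{16d}.
\]
If $\wh{\Delta}_{s-1,i}\le 15d\gamma_{s-1}$, the right side is at most $1$ and we are done. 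So the remaining case is $\wh{\Delta}_{s-1,i}>15d\gamma_{s-1}$: action $i$ looked clearly suboptimal in the previous epoch. Since $\gamma_s/\gamma_{s-1}\approx 1/\sqrt2$, it suffices in this case to show $\wh{\Delta}_{s,i}\ge \wh{\Delta}_{s-1,i}/2$.

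To transfer the gap from epoch $s-1$ to epoch $s$, let $k$ be the minimizer of $\wh{\ell}_{s-1}$. Then $\wh{\Delta}_{s-1,k}=0$, so the sandwich gives $x_{s-1,k}\ge 1/d$. Also $\wh{\Delta}_{s,i}\ge \wh{\ell}_{s,i}-\wh{\ell}_{s,k}$. Passing both $\wh{\ell}_{s}$ and $\wh{\ell}_{s-1}$ through the true loss $\ell$ yields
\[
\wh{\Delta}_{s,i}\ \ge\ \wh{\Delta}_{s-1,i}-E_i-E_k,
\]
where $E_a$ is the sum of the $\calE_1$ errors for action $a$ in epochs $s$ and $s-1$. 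These errors involve $1/\sqrt{x_{s-1,a}}$ and $1/\sqrt{x_{s-2,a}}$. The induction hypothesis $x_{s-1,a}\le 16d\,x_{s-2,a}$ converts the latter into $4\sqrt d/\sqrt{x_{s-1,a}}$, so $E_a\le(\beta_s+4\sqrt d\,\beta_{s-1})/\sqrt{x_{s-1,a}}$.
\begin{itemize}
\item For $a=k$, using $x_{s-1,k}\ge 1/d$ gives $E_k=\order(d\beta_s)$.
\item For $a=i$, using $1/x_{s-1,i}\le d+\wh{\Delta}_{s-1,i}/\gamma_{s-1}$ gives $E_i=\order\big(\sqrt d\,\beta_s\sqrt{d+\wh{\Delta}_{s-1,i}/\gamma_{s-1}}\big)$. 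By AM-GM this is at most $\tfrac14\wh{\Delta}_{s-1,i}+\order(d\beta_s^2/\gamma_s+d\beta_s)$.
\end{itemize}
With $\gamma_s=40\cdot2^{-s/2}\sqrt{\log(8ds^2/\delta)}$ and $\beta_s=8\cdot 2^{-s/2}\sqrt{\log(8ds^2/\delta)}$, every additive term is $\order(d\gamma_s)$. Such a term is a small fraction of $\wh{\Delta}_{s-1,i}>15d\gamma_{s-1}$ in the remaining case, which leaves $\wh{\Delta}_{s,i}\ge\wh{\Delta}_{s-1,i}/2$ as required.

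The main obstacle is the constant bookkeeping in this last step. The error terms must fit inside the $15d\gamma_{s-1}$ threshold and the factor-$\tfrac12$ budget. The losing factor $\gamma_s/\gamma_{s-1}$ (slightly above $1/\sqrt2$ because of the $\log s^2$ term) must also be tracked. The induction must likewise be used carefully so that the error on the reference arm $k$ does not blow up: the point of choosing $k$ as the epoch-$(s-1)$ leader rather than the epoch-$s$ leader is exactly to guarantee $x_{s-1,k}\ge 1/d$. If the constants do not close as stated, I would first adjust the threshold case split (e.g.\ $\wh{\Delta}_{s-1,i}$ vs.\ $C d\gamma_{s-1}$ with $C$ tuned) before touching the factor $16d$.
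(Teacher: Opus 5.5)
Your argument is correct, and it follows a genuinely different route from the paper's.

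\textbf{The paper's route.} The paper never solves for $x_s$ explicitly. It adds the first-order optimality conditions of epochs $s$ and $s+1$ to bound the local-norm distance $Q_s=\sum_i (x_{s+1,i}-x_{s,i})^2/(x_{s+1,i}x_{s,i})$ by $2(\rho_s-1)d+\rho_s^2\epsilon_s^2/\gamma_s^2$. The estimator drift $|\wh\ell_{s,i}-\wh\ell_{s+1,i}|\le\epsilon_s/\sqrt{x_{s,i}}$, which is where the inductive hypothesis enters, is absorbed by Cauchy--Schwarz against $\sqrt{Q_s}$. Then $u+u^{-1}-2\le Q_s\le(\sqrt\alpha-1/\sqrt\alpha)^2$ gives the ratio bound for every arm at once, with no case analysis.

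\textbf{Your route.} You use the closed-form KKT solution to get the sandwich $1+\wh\Delta_{s,i}/\gamma_s\le 1/x_{s,i}\le d+\wh\Delta_{s,i}/\gamma_s$, and then argue arm by arm. An arm with $\wh\Delta_{s-1,i}\le 15d\gamma_{s-1}$ already has $x_{s-1,i}\ge 1/(16d)$, so it is trivially fine. An arm that looked clearly bad keeps half its estimated gap. You transfer that gap through the previous leader $k$, whose $x_{s-1,k}\ge 1/d$ keeps its error at $\order(d\beta_s)$.

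\textbf{The constants.} They close with room to spare. Using $\gamma_{s-1}=5\beta_{s-1}$ and $\beta_s\le\beta_{s-1}$, you get $E_k\le d\gamma_{s-1}$ and $E_i\le 2d\gamma_{s-1}+\tfrac14\wh\Delta_{s-1,i}$. Hence $E_i+E_k\le 0.45\,\wh\Delta_{s-1,i}$ whenever $\wh\Delta_{s-1,i}>15d\gamma_{s-1}$.

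\textbf{The learning-rate factor.} The ratio $\gamma_s/\gamma_{s-1}\le 1$ works for you, not against you. You actually only need $\wh\Delta_{s,i}\ge\wh\Delta_{s-1,i}/(16d)$, so the time-varying learning rate costs you nothing. The paper, by contrast, pays the $(\rho_s-1)d$ term.

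\textbf{Trade-off.} Your route is more elementary and more informative for bandits, since it shows exactly which arms can gain probability. It does not transfer directly to games. There the sandwich is in terms of gaps of $\wh A_s y_s$, and carrying a gap from epoch $s-1$ to $s$ would require controlling $\wh A_{s-1}(y_s-y_{s-1})$, which multiplicative stability does not make small. The paper's variational argument is built so that, after summing the $x$- and $y$-inequalities, the bilinear cross terms cancel; this is how it obtains \pref{lem:stabGame}.
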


First, we prove the more general stability result below, and \pref{lem:mult-stability} follows from it by choosing $\gamma_s$ appropriately, as shown in the proof of \pref{prop:falcon-convergence}.

\begin{lemma}
\label{lem:mult-stability-induction}

    Suppose that $\calE_1$ holds and define $\rho_s=\gamma_s/\gamma_{s+1}$. Further suppose that the sequence
    $\{\gamma_s\}_{s\geq 1}$ in \pref{alg:falcon} satisfies (1) for all $s\geq 2$,
    \(
    2(\rho_s-1)d
    +
    \frac{
    \rho_s^2(1+\sqrt{\alpha})^2\max\{\beta_s,\beta_{s+1}\}^2
    }{\gamma_s^2}
    \leq
    \left(\sqrt{\alpha}-\frac{1}{\sqrt{\alpha}}\right)^2
    \)
    for certain $\alpha\geq d$. (2) $\rho_s\in[1,\sqrt{2}]$. Then \pref{alg:falcon}
    guarantees that, for all $s\geq 2$,
    $
        x_{s,i}\leq \alpha x_{s-1,i},\forall i\in[d].
    $
\end{lemma}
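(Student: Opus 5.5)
We argue by induction on $s$, working throughout on the event $\calE_1$. For the base case $s=2$: since $\wh{\ell}_1=\mathbf{0}$, the log-barrier objective is symmetric and $x_1$ is uniform. Hence $x_{2,i}\le 1 = d\cdot x_{1,i}\le \alpha x_{1,i}$ because $\alpha\ge d$.

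For the inductive step, the main tool is the explicit KKT form of the log-barrier solution. There is a multiplier $\lambda_s>-\min_i\wh{\ell}_{s,i}$ such that $x_{s,i}=\gamma_s/(\wh{\ell}_{s,i}+\lambda_s)$ for all $i$, where $\lambda_s$ is pinned down by $\sum_i x_{s,i}=1$. Write $u_i\triangleq \wh{\ell}_{s-1,i}+\lambda_{s-1}=\gamma_{s-1}/x_{s-1,i}$. The target $x_{s,i}\le\alpha x_{s-1,i}$ is then equivalent to
\[
\wh{\ell}_{s,i}+\lambda_s\ \ge\ \frac{u_i}{\alpha\,\rho_{s-1}}\qquad\text{for all }i,
\]
using $\gamma_{s-1}/\gamma_s=\rho_{s-1}$.

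The next step controls the drift of the loss estimates. By $\calE_1$ applied at epochs $s$ and $s-1$, together with the induction hypothesis $x_{s-1,i}\le\alpha x_{s-2,i}$ (so $1/\sqrt{x_{s-2,i}}\le\sqrt{\alpha}/\sqrt{x_{s-1,i}}$), we get
\[
|\wh{\ell}_{s,i}-\wh{\ell}_{s-1,i}|\le \frac{(1+\sqrt\alpha)\max\{\beta_{s-1},\beta_s\}}{\sqrt{x_{s-1,i}}} = c\sqrt{u_i/\gamma_{s-1}},
\]
where $c\triangleq(1+\sqrt\alpha)\max\{\beta_{s-1},\beta_s\}$. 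This reduces the problem to comparing $\wh{\ell}_{s,i}+\lambda_s\ge u_i-c\sqrt{u_i/\gamma_{s-1}}+(\lambda_s-\lambda_{s-1})$ with $u_i/(\alpha\rho_{s-1})$.

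The main obstacle is lower bounding $\lambda_s-\lambda_{s-1}$; that is, showing the normalization cannot shift too much. For this I would use the two normalization identities $\sum_i\gamma_s/(\wh{\ell}_{s,i}+\lambda_s)=1=\sum_i x_{s-1,i}$ together with the key observation that the $x_{s-1}$-weighted average of the $u_i$ is exactly $\sum_i x_{s-1,i}u_i=d\gamma_{s-1}$. Concretely, suppose $\lambda_s<\lambda_{s-1}-\kappa$. Then every denominator satisfies
\[
\wh{\ell}_{s,i}+\lambda_s< u_i+c\sqrt{u_i/\gamma_{s-1}}-\kappa,
\]
so $1=\sum_i x_{s,i}>\sum_i x_{s-1,i}\cdot\frac{u_i/\rho_{s-1}}{u_i+c\sqrt{u_i/\gamma_{s-1}}-\kappa}$. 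Applying Jensen to the convex map $t\mapsto 1/t$ under the weights $x_{s-1}$, and using $\sum_i x_{s-1,i}u_i=d\gamma_{s-1}$ and $\sum_i x_{s-1,i}\sqrt{u_i/\gamma_{s-1}}\le\sqrt d$, this yields a contradiction for $\kappa$ of order $(\rho_{s-1}-1)d\gamma_{s-1}+c\sqrt d$. This gives $\lambda_s\ge\lambda_{s-1}-\kappa$.

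Plugging this back, it remains to check pointwise that
\[
u-c\sqrt{u/\gamma_{s-1}}-\kappa\ \ge\ \frac{u}{\alpha\rho_{s-1}}\qquad\text{for all }u\ge\gamma_{s-1}.
\]
Viewed in the variable $\sqrt u$, the left side minus the right side is a quadratic. Completing the square and dividing by $\gamma_{s-1}$, the requirement becomes
\[
\Bigl(1-\tfrac1{\alpha\rho}\Bigr)-\tfrac{c}{\gamma}-\tfrac{\kappa}{\gamma}\ge0
\]
at $u=\gamma$, together with a discriminant-type inequality. I expect these to be exactly what hypothesis (1), namely $2(\rho-1)d+\rho^2c^2/\gamma^2\le(\sqrt\alpha-1/\sqrt\alpha)^2$, encodes, with hypothesis (2), $\rho\le\sqrt2$, absorbing the mismatch between $\gamma_{s-1}$ and $\gamma_s$ and between $\rho_{s-1}$ and $\rho_s$. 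If the Jensen step turns out too lossy for large $u_i$, my fallback is to split the coordinates. Coordinates with $u_i\le 2d\gamma_{s-1}$ would be handled directly by the quadratic bound. Coordinates with large $u_i$ have tiny $x_{s-1,i}$, so their contribution to the normalization is negligible, and the multiplicative slack $\sqrt\alpha-1/\sqrt\alpha$ handles them because the square-root error term $c\sqrt{u_i/\gamma_{s-1}}$ is dominated by $u_i$ itself.
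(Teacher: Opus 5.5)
There is a genuine gap in your last step: the pointwise inequality you reduce to is not implied by hypothesis (1), so your expectation that it is ``exactly what (1) encodes'' does not hold.

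First, as you state it (for all $u\ge\gamma_{s-1}$), it is false even for the parameters used downstream. At $u=\gamma_{s-1}$ it needs $c+\kappa<\gamma_{s-1}$. But (1) permits $c/\gamma_{s-1}$ up to about $\sqrt\alpha$, and the choice $\gamma_s=5\beta_s$, $\alpha=16d$ of \pref{prop:falcon-convergence} gives $c/\gamma_{s-1}=(1+4\sqrt d)/5\ge 1$. You are missing the easy observation that coordinates with $x_{s-1,i}\ge 1/\alpha$ (i.e.\ $u_i\le\alpha\gamma_{s-1}$) are trivial, since $x_{s,i}\le 1$. Your fallback has this backwards: it sends the small-$u_i$ coordinates to the quadratic bound.

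Second, even restricted to $u>\alpha\gamma_{s-1}$, the argument does not close under (1). Take $\rho_{s-1}=1$ and $c/\gamma_{s-1}=\sqrt\alpha-1/\sqrt\alpha$, which meets (1) with equality. Then $u(1-\frac1\alpha)-c\sqrt{u/\gamma_{s-1}}$ equals $0$ at $u=\alpha\gamma_{s-1}$. So just above $\alpha\gamma_{s-1}$, exactly where the claim is non-trivial, your requirement that this quantity be at least $\kappa$ fails. Your $\kappa$ (here $c\sqrt d$) is strictly positive, as any uniform bound $\lambda_s\ge\lambda_{s-1}-\kappa$ must be, because all estimates may drift upward and push $\lambda_s$ down.

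The loss comes from decoupling. You charge the worst-case normalizer shift and the worst-case drift of coordinate $i$ separately, ignoring that a coordinate whose probability grows by a factor near $\alpha$ carries non-negligible mass and itself forces $\lambda_s$ up. Your route can therefore at best prove the lemma under a stronger, different parameter condition. For the concrete $\alpha=16d$, $\gamma_s=5\beta_s$ it does work once restricted to $u>\alpha\gamma_{s-1}$: the quadratic is at least $(11d-\sqrt d-1)\gamma_{s-1}>0$ at $u=\alpha\gamma_{s-1}$ and increasing beyond. That would suffice for \pref{lem:mult-stability}, but not for the lemma as posed.

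The paper never introduces the multiplier. It sums the first-order conditions of epochs $s$ and $s+1$, each tested at the other's solution. This yields the joint local-norm quantity $Q_s=\sum_i (x_{s+1,i}-x_{s,i})^2/(x_{s+1,i}x_{s,i})\le\langle x_{s+1}-x_s,\wh\ell_s/\gamma_s-\wh\ell_{s+1}/\gamma_{s+1}\rangle$. The learning-rate part is at most $(\rho_s-1)d$, and the estimator part is at most $\frac{\rho_s\epsilon_s}{\gamma_s}\sqrt{Q_s}$ by Cauchy--Schwarz with weights $x_{s+1}$. Hence $Q_s\le 2(\rho_s-1)d+\rho_s^2\epsilon_s^2/\gamma_s^2\le(\sqrt\alpha-1/\sqrt\alpha)^2$. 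Each ratio $r=x_{s+1,i}/x_{s,i}$ then obeys $r+1/r-2\le Q_s$, forcing $r\le\alpha$; this is precisely the shape of (1).

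A minor point: your normalization step cites the wrong quantities. The identity $x_{s-1,i}u_i=\gamma_{s-1}$ holds termwise, and $\sum_i\sqrt{x_{s-1,i}}$ does not arise. The bound $\frac{u}{u+e}\ge 1-\frac eu$ actually produces $\sum_i x_{s-1,i}^{3/2}$ and $\sum_i x_{s-1,i}^2$. Using $\sum_i x_{s-1,i}^{3/2}\le(\sum_i x_{s-1,i}^2)^{1/2}$ and $\sum_i x_{s-1,i}^2\ge 1/d$, you do recover your $\kappa=(\rho_{s-1}-1)d\gamma_{s-1}+c\sqrt d$, so that step is salvageable.
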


\begin{proof}[Proof of {\pref{lem:mult-stability-induction}}]
    We prove using induction on $s$. The base case $s=2$ follows since $x_1=\one/d$ and
    \[
        x_{2,i}\leq 1\leq d x_{1,i}\leq \alpha x_{1,i},
    \]
    as $\alpha \ge d$.

    Assume that for some $s\geq 2$, $x_{s,i}\leq \alpha x_{s-1,i}$, for all $i\in[d]$.
    We show that this also holds for $s+1$. Define
    \[
        \bar\beta_s=\max\{\beta_s,\beta_{s+1}\},
        \quad
        \epsilon_s=(1+\sqrt{\alpha})\bar\beta_s.
    \]
    Under event $\calE_1$, applying \pref{lem:estimation-error} and the induction hypothesis, for every $i\in[d]$,
    \begin{align}
        \abs{\wh \ell_{s,i}-\wh \ell_{s+1,i}}
        &\leq
        \abs{\wh \ell_{s,i}-\ell_{i}}
        +
        \abs{\wh \ell_{s+1,i}-\ell_{i}} \\&\leq \frac{\beta_s}{\sqrt{x_{s-1,i}}}+\frac{\beta_{s+1}}{\sqrt{x_{s,i}}} \\&\leq
        \frac{\sqrt{\alpha}\beta_s+\beta_{s+1}}{\sqrt{x_{s,i}}}
        \leq
        \frac{\epsilon_s}{\sqrt{x_{s,i}}}.
        \label{eqn:ell-diff-stab}
    \end{align}

    Define 
    \[
        Q_s\triangleq\sum_{i=1}^d\frac{(x_{s+1,i}-x_{s,i})^2}{x_{s+1,i}x_{s,i}}.
    \]

    By first-order optimality at epochs $s$ and $s+1$, for any $x \in \Delta_d$,
    \begin{align}
        \inner{\wh{\ell}_s-\frac{\gamma_s}{x_s},x-x_s}\geq 0,
        \label{eqn:first-order-x-r-mab}\\
        \inner{\wh{\ell}_{s+1}-\frac{\gamma_{s+1}}{x_{s+1}},x-x_{s+1}}\geq 0.
        \label{eqn:first-order-x-r-plus-one-mab}
    \end{align}
    Setting $x=x_{s+1}$ in \pref{eqn:first-order-x-r-mab} and $x=x_s$ in \pref{eqn:first-order-x-r-plus-one-mab}, and then summing them, gives
    \[
        Q_s
        \leq
        \inner{x_{s+1}-x_s,
        \frac{\wh \ell_s}{\gamma_s}
        -
        \frac{\wh \ell_{s+1}}{\gamma_{s+1}} }.
    \]
    We have $\rho_s=\gamma_s/\gamma_{s+1}$. 
    Thus, we get
    \begin{align}
        Q_s
        &\leq 
        \frac{\rho_s}{\gamma_s}
        \inner{x_{s+1}-x_s,\wh\ell_s-\wh \ell_{s+1}}
        +
        \frac{1-\rho_s}{\gamma_s}
        \inner{x_{s+1}-x_s,\wh \ell_s}.
        \label{eqn:x-stability-decomp-mab}
    \end{align}

    From \pref{eqn:first-order-x-r-mab} with $x=x_{s+1}$,
    \[
        \inner{x_{s+1}-x_s,\frac{\wh \ell_s}{\gamma_s}}
        \geq
        \inner{x_{s+1}-x_s,\frac{1}{x_s}}.
    \]
    Since $\rho_s\geq 1$, this implies
    \begin{align*}
        \frac{1-\rho_s}{\gamma_s}
        \inner{x_{s+1}-x_s,\wh \ell_s}
        &\leq
        (1-\rho_s)
        \inner{x_{s+1}-x_s,\frac{1}{x_s}} \\
        &=
        (\rho_s-1)
        \left(
        d-\inner{\frac{1}{x_s},x_{s+1}}
        \right)
        \leq
        (\rho_s-1)d.
    \end{align*}

    For the first term, using \pref{eqn:ell-diff-stab},
    \begin{align*}
        \abs{\innerp{x_{s+1}-x_s}{\wh \ell_s-\wh \ell_{s+1}}}
        &\leq \sum_{i=1}^d \abs{x_{s+1,i}-x_{s,i}}\abs{\wh \ell_{s,i}-\wh \ell_{s+1,i}} \\
        &\leq \epsilon_s \sum_{i=1}^d
        \frac{|x_{s+1,i}-x_{s,i}|}{\sqrt{x_{s,i}}} \\
        &= \epsilon_s
        \sum_{i=1}^d
        \frac{|x_{s+1,i}-x_{s,i}|}{\sqrt{x_{s+1,i}x_{s,i}}}
        \sqrt{x_{s+1,i}} \\
        &\leq \epsilon_s\sqrt{Q_s},
    \end{align*}
    where the last step follows from the Cauchy-Schwarz inequality.

    Combining the bounds,
    \[
        Q_s \leq (\rho_s-1)d + \frac{\rho_s\epsilon_s\sqrt{Q_s}}{\gamma_s}.
    \]

    Using $a\sqrt{Q_s}\leq (Q_s+a^2)/2$ with $a=\frac{\rho_s\epsilon_s}{\gamma_s}$,
    \[
        Q_s \leq 2(\rho_s-1)d + \frac{\rho_s^2\epsilon_s^2}{\gamma_s^2}.
    \]
    By the parameter choice and $\epsilon_s = (1+\sqrt{\alpha})\max\cbr{\beta_s, \beta_{s+1}}$,
    \[
        Q_s \leq \left(\sqrt{\alpha}-\frac{1}{\sqrt{\alpha}}\right)^2.
    \]

    Thus, for each $i \in [d]$,
    \[
        \frac{x_{s+1,i}}{x_{s,i}}
        +
        \frac{x_{s,i}}{x_{s+1,i}}
        -2
        =
        \frac{(x_{s+1,i}-x_{s,i})^2}{x_{s+1,i}x_{s,i}}
        \leq
        Q_s
        \leq
        \alpha+\frac{1}{\alpha}-2.
    \]
    Since $h(u)\triangleq u+u^{-1}-2$ is increasing for $u\ge 1$, taking $u=\frac{x_{s+1,i}}{x_{s,i}}$ implies that either $u\leq 1$ or $u\leq \alpha$ whenever $u>0$. In either case, we have
    \[
        x_{s+1,i}\leq \alpha x_{s,i},
    \]
    as $\alpha \ge d \ge 1$.
    This completes the induction.
\end{proof}

\begin{proof}[Proof of {\pref{lem:mult-stability}}]
    We show that the choices of $\gamma_s$ and $\beta_s$ in \pref{prop:falcon-convergence} satisfy the conditions in \pref{lem:mult-stability-induction} with $\alpha = 16d$. By definition of $\gamma_s$, we know that
$
    \rho_s
    \triangleq
    \frac{\gamma_s}{\gamma_{s+1}}
    =
    \sqrt{
    2\cdot
    \frac{\log(8ds^2/\delta)}
    {\log(8d(s+1)^2/\delta)}
    }\leq \sqrt{2}.
$
For $s\geq 2$,
\begin{align*}
    \log\frac{8d(s+1)^2}{\delta}
    &=
    \log\frac{8ds^2}{\delta}
    +
    2\log\left(1+\frac{1}{s}\right) \leq
    2\log\frac{8ds^2}{\delta},
\end{align*}
since
$
    2\log\left(1+\frac{1}{s}\right)
    \leq
    2\log\frac32
    \leq
    \log\frac{8ds^2}{\delta}.
$
Thus, $1\leq \rho_s\leq \sqrt{2}$. This also implies $\beta_s\geq \beta_{s+1}$, and hence $\max\{\beta_s,\beta_{s+1}\}=\beta_s$.

We now verify the first condition in \pref{lem:mult-stability-induction}. Using
$\gamma_s=5\beta_s$, $\rho_s\leq\sqrt{2}$, and $\alpha=16d$, we have
\begin{align*}
    &2(\rho_s-1)d
    +
    \frac{
    \rho_s^2(1+\sqrt{\alpha})^2\max\{\beta_s,\beta_{s+1}\}^2 
    }{\gamma_s^2} \leq
    2(\sqrt{2}-1)d
    +
    \frac{
    2(1+4\sqrt{d})^2\beta_s^2 
    }{(5\beta_s)^2}\leq
    3d.
\end{align*}
On the other hand,
$
    \left(\sqrt{\alpha}-\frac{1}{\sqrt{\alpha}}\right)^2
    =
    \left(\sqrt{16d}-\frac{1}{\sqrt{16d}}\right)^2
    =
    16d-2+\frac{1}{16d}
    \geq
    14d.
$
Therefore, the conditions in \pref{lem:mult-stability-induction} holds with $\alpha = 16d$, implying that for all $s \ge 2$,
$
    x_{s,i}\leq 16d\,x_{s-1,i}, \forall i\in[d].
$
\end{proof}

Finally, we prove the last-iterate convergence guarantee.
\begin{proof}[Proof of {\pref{prop:falcon-convergence}}]
We prove the claim on the event $\calE_1$, which holds with probability at least
$1-\delta$ by \pref{lem:estimation-error}. Since the length of epoch $s$ is $2^{s-1}$, equivalently, we prove that $\dgap(x_s,y_s)=\order(d2^{-s/2}\sqrt{\log(ds^2/\delta)})$. Using \pref{lem:estimated-reg-bound} and \pref{lem:mult-stability}, the proof follows from the discussion of the single-agent case in \pref{sec:alg}.
\end{proof}


\section{Omitted Proofs for Games}\label{app:falcon_game}

In this section, we show the omitted proofs in \pref{sec:alg} for the general game case. We start with the lemma showing the concentration between $\wh{A}_{s,ij}$ and the true game matrix $A_{ij}$.

\begin{definition}\label{def:closeGame}
    Define $\calE_2$ to be the event: $\calE_2\triangleq\left\{s\geq 2, \forall i,j\in[d],\left|\wh{A}_{s,ij}-A_{ij}\right|\leq \frac{\beta_s}{\sqrt{x_{s-1,i}y_{s-1,j}}}\right\}$ where $\beta_s = \sqrt{\frac{16\log(8d^2s^2/\delta)}{2^{s-2}}}$.
\end{definition}

\begin{restatable}[Estimation error]{lemma}{closeGame}\label{lem:closeGame}
    \pref{alg:game_falcon} guarantees that $\calE_2$ holds with probability at least $1-\delta$.
\end{restatable}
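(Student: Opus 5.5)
The plan is to mirror the proof of \pref{lem:estimation-error} almost verbatim, replacing the $d$ arms by the $d^2$ action pairs $(i,j)$ and the sampling probability $x_{s-1,i}$ by the product $x_{s-1,i}y_{s-1,j}$. Fix $s\ge 2$ and condition on the history before epoch $s-1$. Then $(x_{s-1},y_{s-1})$ is fixed, since it is a deterministic function of $\wh A_{s-1}$ and $\gamma_{s-1}$. Because the two players sample independently, in each of the $2^{s-2}$ rounds of epoch $s-1$ the pair $(i,j)$ is hit independently with probability $p_{ij}=x_{s-1,i}y_{s-1,j}$. Hence $n_{s-1,ij}$ is a sum of independent Bernoullis with mean $2^{s-2}p_{ij}$. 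The goal is to show that, for a per-epoch failure budget $\xi_s$, with probability at least $1-\xi_s$ we have
\[
|\wh A_{s,ij}-A_{ij}|\le\sqrt{\tfrac{16\log(4d^2/\xi_s)}{2^{s-2}p_{ij}}}
\]
simultaneously for all $i,j$.

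For a fixed pair $(i,j)$ I split into two cases.

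\emph{Case $2^{s-2}p_{ij}\le 16\log(4d^2/\xi_s)$.} Here the right-hand side is at least $1$. There is a range subtlety: $A_{ij}\in[-1,1]$, and $\wh A_{s,ij}\in[-1,1]$ (it is $0$ when the count is zero), so $|\wh A_{s,ij}-A_{ij}|$ can be as large as $2$ rather than $1$. I would handle this by enlarging the constant inside the log, or by noting that the constant $16$ leaves room, e.g.\ by requiring the threshold $2^{s-2}p_{ij}\le 4\log(\cdot)$ in this case. This changes only constants, and I would adjust it so the stated $\beta_s$ still works.

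\emph{Case $2^{s-2}p_{ij}> 16\log(4d^2/\xi_s)$.} First, the multiplicative Chernoff bound (\pref{lem:mult-chernoff}) with deviation $1/2$ gives $n_{s-1,ij}\ge 2^{s-2}p_{ij}/2$ except with probability $e^{-2\log(4d^2/\xi_s)}$. Second, conditioned on the count $n$, the rewards at those rounds are independent in $[-1,1]$ with mean $A_{ij}$. Hoeffding's inequality (with range $2$, again absorbed by the constant $16$) then bounds the deviation probability by $2e^{-c\log(4d^2/\xi_s)}$. The two failure terms together are at most $\xi_s/d^2$.

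To finish, take a union bound over the $d^2$ pairs, set $\xi_s=\delta/(2s^2)$ so that $\log(4d^2/\xi_s)=\log(8d^2s^2/\delta)$ matches $\beta_s$, and take a union bound over $s\ge2$; the total failure probability is $\sum_s \delta/(2s^2)\le\delta$. The main obstacle is not conceptual. It is the bookkeeping around the conditioning. Reward noise must be independent given the counts, which holds because the $r_t$ are conditionally independent given the action pairs and $(x_{s-1},y_{s-1})$ is history-measurable. The second point is checking that the constants tolerate the doubled range $[-1,1]$. The independence of $i_t$ and $j_t$ within a round is what makes $p_{ij}$ a product, and this is what produces the $\sqrt{x_{s-1,i}y_{s-1,j}}$ denominator.
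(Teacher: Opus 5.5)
Your proposal follows the paper's own proof: port the bandit estimation-error argument to the $d^2$ action pairs, use the same two-case split, apply Chernoff and then Hoeffding to get per-pair failure $\xi_s/d^2$, and set $\xi_s=\delta/(2s^2)$ with union bounds over pairs and epochs. It is also more careful than the written proof. The paper's proof puts $x_{s-1,i}$ alone in the denominator, whereas the product $x_{s-1,i}y_{s-1,j}$, as you have it, is what is needed. The paper also silently inherits the $[0,1]$ range from the bandit proof.

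The one step that does not close as written is your fix for the $[-1,1]$ range. Write $L=\log(4d^2/\xi_s)$ and $\mu=2^{s-2}x_{s-1,i}y_{s-1,j}$. Enlarging the constant inside the log would change $\beta_s$, so that option does not prove the statement as given. Your other option lowers the Case 1 threshold to $\mu\le 4L$, which does give $\sqrt{16L/\mu}\ge 2$. But Case 2 must then cover $\mu>4L$, not $\mu>16L$. On the range $4L<\mu\le 16L$, Chernoff with deviation $1/2$ only gives $e^{-\mu/8}<e^{-L/2}$, which is not $O(\xi_s/d^2)$.

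The repair is to use deviation $3/4$ in Case 2:
\begin{itemize}
\item Chernoff gives $\Pr[n_{s-1,ij}<\mu/4]\le e^{-9\mu/32}\le e^{-9L/8}\le \xi_s/(4d^2)$.
\item Hoeffding for range-$2$ variables, with $n\ge\mu/4$ and $\epsilon^2=16L/\mu$, then gives $2e^{-n\epsilon^2/2}\le 2e^{-2L}\le \xi_s/(8d^2)$.
\end{itemize}
With this retuning the stated $\beta_s$ does survive the $[-1,1]$ range.
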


\begin{proof}
    Following a similar analysis to \pref{lem:estimation-error}, we obtain a counterpart for \pref{eqn:concentration} in the game case: given a pair of $(i,j)\in[d]\times[d]$, we know that
    \begin{align*}
        \Pr\left[\abs{\wh{A}_{s,ij}-A_{ij}} > \sqrt{\frac{16\log(4d^2/\xi_s)}{2^{s-2}x_{s-1,i}}}\right] \le e^{-2\log(4d^2/\xi_s)}+2e^{-16\log(4d^2/\xi_s)} \le \frac{\xi_s}{d^2}.
    \end{align*}
    Picking $\xi_s=\frac{\delta}{2s^2}$ and taking a union bound over all $(i,j)$ pairs and $s=1,2,\dots$ complete the proof.
\end{proof}

We then prove \pref{lem:dgapEstGame}, which is the analogue of \pref{lem:estimated-reg-bound} in the single player case. For clarity, we restate the lemma as follows.
\dgapEstGame*
\begin{proof}
    Since $\Phi_s(x,y)$ is strongly convex in $x$ and strongly concave in $y$, $(x_s,y_s)$ is the unique saddle point of $\Phi_s(x,y)$. Therefore we know that for any $x\in\Delta_d$ and $y\in\Delta_d$, $\Phi_s(x_s,y)\leq \Phi_s(x_s,y_s)\leq \Phi_s(x,y_s)$, meaning that 
    \begin{align*}
        x_s = \argmin_{x\in\Delta_d}\Phi_s(x,y_s)=\argmin_{x\in\Delta_d}\left\{x^\top \wh{A}_sy_s + \gamma_s\sum_{i=1}^d\log \frac{1}{x_i}\right\},\\
        y_s = \argmin_{y\in\Delta_d}-\Phi_s(x_s,y)=\argmin_{y\in\Delta_d}\left\{-x_s^\top \wh{A}_sy + \gamma_s\sum_{i=1}^d\log \frac{1}{y_i}\right\}.
    \end{align*}
    With an abuse of notation, for $v\in \R^d$, we denote $\frac{1}{v}\in\R^d$ with the $i$-th coordinate $\frac{1}{v_i}$. According to first-order optimality, we have for any $x\in\Delta_d$ and $y\in\Delta_d$,
    \begin{align*}
        \inner{\wh{A}_sy_s-\frac{\gamma_s}{x_{s}},x-x_s}\geq 0,\\
        \inner{-\wh{A}_s^\top x_s-\frac{\gamma_s}{y_{s}},y-y_s}\geq 0.
    \end{align*}
    Picking $x=e_i$ and $y=e_j$ and summing up the above two inequalities lead to the conclusion.
\end{proof}

Next, we aim to prove \pref{lem:stabGame_concise}, showing the multiplicative stability of \pref{alg:game_falcon} between consecutive epochs. In fact, we will prove the following more general \pref{lem:stabGame} and \pref{lem:stabGame_concise} will serve as a corollary by picking the parameter $\gamma_s$ as stated in \pref{thm:dgapGame}.

\begin{restatable}[Multiplicative stability]{lemma}{stabGame}
\label{lem:stabGame}

    Suppose that $\calE_2$ holds and define $\rho_s=\gamma_s/\gamma_{s+1}$. Further suppose that the sequence
    $\{\gamma_s\}_{s\geq 1}$ in \pref{alg:game_falcon} satisfies (1) for all $s\geq 2$,
    \(
    4(\rho_s-1)d
    +
    \frac{
    2\rho_s^2(1+\alpha)^2\max\{\beta_s,\beta_{s+1}\}^2 d
    }{\gamma_s^2}
    \leq
    \left(\sqrt{\alpha}-\frac{1}{\sqrt{\alpha}}\right)^2
    \)
    for certain $\alpha\geq d$. (2) $\rho_s\in[1,\sqrt{2}]$. Then \pref{alg:game_falcon}
    guarantees that, for all $s\geq 2$,
    $
        x_{s,i}\leq \alpha x_{s-1,i},
        y_{s,j}\leq \alpha y_{s-1,j},\forall i,j\in[d].
    $
\end{restatable}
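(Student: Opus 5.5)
We argue by induction on $s$, mirroring the proof of \pref{lem:mult-stability-induction}. The base case $s=2$ is immediate: the saddle point of $\Phi_1$ with $\wh{A}_1=0$ is $(\one/d,\one/d)$, so $x_{2,i}\le 1\le d x_{1,i}\le \alpha x_{1,i}$, and likewise for $y$. For the inductive step, assume $x_{s,i}\le\alpha x_{s-1,i}$ and $y_{s,j}\le\alpha y_{s-1,j}$ for all $i,j$. Under $\calE_2$ we then get, for every entry,
\[
\abs{\wh{A}_{s,ij}-\wh{A}_{s+1,ij}}\le \frac{\beta_s}{\sqrt{x_{s-1,i}y_{s-1,j}}}+\frac{\beta_{s+1}}{\sqrt{x_{s,i}y_{s,j}}}\le \frac{(\alpha+1)\bar\beta_s}{\sqrt{x_{s,i}y_{s,j}}},
\]
with $\bar\beta_s=\max\{\beta_s,\beta_{s+1}\}$. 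The factor $\alpha$ rather than $\sqrt\alpha$ comes from using the hypothesis for both players, and it matches the $(1+\alpha)^2$ in condition (1).

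Next we use the joint potential
\[
Q_s=\sum_i\frac{(x_{s+1,i}-x_{s,i})^2}{x_{s+1,i}x_{s,i}}+\sum_j\frac{(y_{s+1,j}-y_{s,j})^2}{y_{s+1,j}y_{s,j}}.
\]
We take the four first-order conditions from the proof of \pref{lem:dgapEstGame}, for $x_s,y_s$ at epoch $s$ and $x_{s+1},y_{s+1}$ at epoch $s+1$. We plug $x_{s+1}$ into the epoch-$s$ conditions and $x_s$ into the epoch-$(s+1)$ conditions (similarly for $y$), divide by $\gamma_s$ and $\gamma_{s+1}$ respectively, and add all four. The log-barrier terms produce $Q_s$, and the remaining terms are bilinear in $\wh{A}_s/\gamma_s$ and $\wh{A}_{s+1}/\gamma_{s+1}$.

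The key point is that the bilinear terms $x^\top\wh{A}y$ must cancel up to the difference $\wh{A}_s-\wh{A}_{s+1}$. This is where the zero-sum (monotone) structure enters. Writing $\Delta x=x_{s+1}-x_s$ and $\Delta y=y_{s+1}-y_s$, the $\wh{A}_{s+1}$ contributions combine into $-\frac{1}{\gamma_{s+1}}\big(\Delta x^\top\wh{A}_{s+1}y_{s+1}-x_{s+1}^\top\wh{A}_{s+1}\Delta y\big)$ plus analogous terms. Rewriting $\wh{A}_{s+1}=\wh{A}_s+(\wh{A}_{s+1}-\wh{A}_s)$, the skew-symmetric operator $(x,y)\mapsto(\wh{A}y,-\wh{A}^\top x)$ gives zero when paired with $(\Delta x,\Delta y)$ against itself. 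What remains is
\[
Q_s\le \frac{\rho_s}{\gamma_s}\Big[\Delta x^\top(\wh{A}_s-\wh{A}_{s+1})y_{s+1}-x_{s+1}^\top(\wh{A}_s-\wh{A}_{s+1})\Delta y\Big]+\text{(learning-rate terms)}.
\]
The learning-rate terms come from $1-\rho_s$ multiplying the epoch-$s$ first-order condition. Exactly as in the bandit case, they are bounded by $(\rho_s-1)\big(2d-\inner{1/x_s,x_{s+1}}-\inner{1/y_s,y_{s+1}}\big)$. This step needs care, since the bilinear remainder at epoch $s$ also appears and the signs must be tracked, and the budget of $4(\rho_s-1)d$ in condition (1) leaves room for a factor of $2$ in this bookkeeping.

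The main obstacle is bounding the error term by $\sqrt{Q_s}$. We write
\[
\abs{\Delta x^\top E y_{s+1}}\le\sum_{i,j}\abs{\Delta x_i}\frac{(1+\alpha)\bar\beta_s}{\sqrt{x_{s,i}y_{s,j}}}y_{s+1,j},
\]
where $E=\wh{A}_s-\wh{A}_{s+1}$. This has the unfavorable factor $y_{s+1,j}/\sqrt{y_{s,j}}$. To handle it, we bound $y_{s+1,j}\le\sqrt{y_{s+1,j}}$ after first rewriting $\abs{\Delta x_i}/\sqrt{x_{s,i}}=\frac{\abs{\Delta x_i}}{\sqrt{x_{s,i}x_{s+1,i}}}\sqrt{x_{s+1,i}}$, and then apply Cauchy--Schwarz over $i$, which gives $\sqrt{Q_s}$. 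Over $j$ we need $\sum_j y_{s+1,j}/\sqrt{y_{s,j}}$. If the ratio $y_{s+1,j}/y_{s,j}$ were unbounded this sum could be large, so we instead use Cauchy--Schwarz in the form $\sum_j y_{s+1,j}/\sqrt{y_{s,j}}\le\sqrt{\sum_j y_{s+1,j}^2/y_{s,j}}$ and control $\sum_j y_{s+1,j}^2/y_{s,j}\le 1+\sum_j(\Delta y_j)^2/y_{s,j}$. The resulting cross terms give a bound of the form $\sqrt{d}\,\epsilon_s\sqrt{Q_s}\,(1+\sqrt{Q_s})$-ish, where $\epsilon_s=(1+\alpha)\bar\beta_s$; this is the delicate step. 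Failing that, we expect a cruder bound $\sqrt{2d\,Q_s}\,\epsilon_s$ via $\sqrt{y_{s+1,j}}\le 1$ and $\sum_j 1/\sqrt{\cdot}$ tricks, which explains the factor $d$ in condition (1).

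Finally, AM-GM gives $Q_s\le 4(\rho_s-1)d+2\rho_s^2\epsilon_s^2 d/\gamma_s^2\le(\sqrt\alpha-1/\sqrt\alpha)^2$. Each coordinate term $u+u^{-1}-2$ is then at most $\alpha+\alpha^{-1}-2$, so $u\le\alpha$ by monotonicity, which closes the induction.
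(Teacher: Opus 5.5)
The skeleton matches the paper's proof: the same induction and base case, the same entrywise bound $|\wh A_{s,ij}-\wh A_{s+1,ij}|\le\varepsilon_s/\sqrt{x_{s,i}y_{s,j}}$ with $\varepsilon_s=(1+\alpha)\bar\beta_s$, the same four first-order conditions and skew cancellation, the same $2(\rho_s-1)d$ learning-rate bound, and the same $h(u)$ finish. However, the error-term step has a genuine gap.

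Because you expanded around $\wh A_s$, your remainder is $\Delta x^\top E y_{s+1}-x_{s+1}^\top E\Delta y$ with $E=\wh A_s-\wh A_{s+1}$. So $E$ is paired with the \emph{new} iterates, while the only available bound on $E$ is normalized by the \emph{old} ones. Write $Q_s,R_s$ for the $x$- and $y$-parts of your potential and $S_s=Q_s+R_s$. If you bound the two pieces separately in absolute value, splitting $y_{s+1}=y_s+\Delta y$ gives $|\Delta x^\top E y_{s+1}|\le\varepsilon_s\sqrt{dQ_s}+\varepsilon_s\sqrt{Q_sR_s}$. Your own route through $\sum_j y_{s+1,j}^2/y_{s,j}\le 1+R_s$ is worse, leaving a cross term of order $\varepsilon_s\sqrt{d\,Q_sR_s}$. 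Either way you end up with $S_s\le 2(\rho_s-1)d+\frac{\rho_s\varepsilon_s}{\gamma_s}\big(\sqrt{2dS_s}+S_s\big)$ or worse, which has a term linear in $S_s$.

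Condition (1) does not force the coefficient $\rho_s\varepsilon_s/\gamma_s$ below $1$. For the paper's choice $\alpha=16d$, $\gamma_s=16d\beta_s$, it equals $\rho_s(1+16d)/(16d)>1$. The inequality then gives no bound on $S_s$, so your final AM-GM step does not follow. Two smaller problems: your Cauchy--Schwarz step $\sum_j y_{s+1,j}/\sqrt{y_{s,j}}\le\sqrt{\sum_j y_{s+1,j}^2/y_{s,j}}$ is missing a factor $\sqrt d$ (take $y_{s+1}=y_s=\one/d$). The fallback via $\sqrt{y_{s+1,j}}\le 1$ cannot work, since the ratio $y_{s+1,j}/y_{s,j}$ is exactly the unknown quantity.

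The missing observation is that the cross terms cancel exactly:
$\Delta x^\top E y_{s+1}-x_{s+1}^\top E\Delta y=\Delta x^\top E y_s-x_s^\top E\Delta y$,
since the two sides differ by $\Delta x^\top E\Delta y-\Delta x^\top E\Delta y=0$. The paper gets this form directly by expanding around $\wh A_{s+1}$ instead:
$\wh A_s y_s-\rho_s\wh A_{s+1}y_{s+1}=\rho_s\wh A_{s+1}(y_s-y_{s+1})+(1-\rho_s)\wh A_s y_s+\rho_s(\wh A_s-\wh A_{s+1})y_s$,
and symmetrically for $y$. With $E$ paired with $y_s$, you get $\sum_j|E_{ij}|y_{s,j}\le\varepsilon_s x_{s,i}^{-1/2}\sum_j\sqrt{y_{s,j}}\le\varepsilon_s\sqrt d\,x_{s,i}^{-1/2}$. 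This $\sum_j\sqrt{y_{s,j}}\le\sqrt d$ is where the factor $d$ in condition (1) actually comes from. It yields $|\Delta x^\top E y_s|\le\varepsilon_s\sqrt{dQ_s}$, and hence $S_s\le 2(\rho_s-1)d+\frac{\rho_s\varepsilon_s\sqrt d}{\gamma_s}\sqrt{2S_s}$ with no linear term. Your AM-GM step then gives exactly $S_s\le 4(\rho_s-1)d+2\rho_s^2\varepsilon_s^2 d/\gamma_s^2$, and the rest of your argument goes through.
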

\begin{proof}
    Let $h(u)=u+u^{-1}-2$. We prove the result by induction. The first non-trivial case $s=2$ follows from the initialization $x_1=y_1=\one/d$, since
    \[
        x_{2,i}\leq 1\leq d x_{1,i}\leq \alpha x_{1,i},
        \qquad
        y_{2,j}\leq 1\leq d y_{1,j}\leq \alpha y_{1,j},
    \]
    where we used $\alpha\geq d$. Suppose that for some $s\geq 2$, $ x_{s,i}\leq \alpha x_{s-1,i}, y_{s,j}\leq \alpha y_{s-1,j},
         \forall i,j\in[d].
    $
    We prove the same conclusion for $s+1$. Define
    \[
        \bar\beta_s\triangleq \max\{\beta_s,\beta_{s+1}\},
        \qquad
        \varepsilon_s\triangleq (1+\alpha)\bar\beta_s.
    \]
    By $\calE_2$ and the induction hypothesis, for every $i,j\in[d]$,
    \begin{align}
        \left|\wh A_{s,ij}-\wh A_{s+1,ij}\right|
        &\leq
        \left|\wh A_{s,ij}-A_{ij}\right|
        +
        \left|\wh A_{s+1,ij}-A_{ij}\right| \notag\\
        &\leq
        \frac{\beta_s}{\sqrt{x_{s-1,i}y_{s-1,j}}}
        +
        \frac{\beta_{s+1}}{\sqrt{x_{s,i}y_{s,j}}} \notag\\
        &\leq
        \frac{\alpha\beta_s+\beta_{s+1}}{\sqrt{x_{s,i}y_{s,j}}}
        \leq
        \frac{\varepsilon_s}{\sqrt{x_{s,i}y_{s,j}}}.
        \label{eqn:A-diff-stab}
    \end{align}

    Let
    \[
        Q_s\triangleq
        \sum_{i=1}^d
        \frac{(x_{s+1,i}-x_{s,i})^2}{x_{s+1,i}x_{s,i}},
        \qquad
        R_s\triangleq
        \sum_{j=1}^d
        \frac{(y_{s+1,j}-y_{s,j})^2}{y_{s+1,j}y_{s,j}}.
    \]

    Applying first-order optimality at phases $s$ and $s+1$, we know that for any $x\in\Delta_d$,
    \begin{align}
        \inner{\wh{A}_sy_s-\frac{\gamma_s}{x_s},x-x_s}\geq 0,
        \label{eqn:first-order-x-r}\\
        \inner{\wh{A}_{s+1}y_{s+1}-\frac{\gamma_{s+1}}{x_{s+1}},x-x_{s+1}}\geq 0.
        \label{eqn:first-order-x-r-plus-one}
    \end{align}
    Picking $x=x_{s+1}$ in \pref{eqn:first-order-x-r} and $x=x_s$ in \pref{eqn:first-order-x-r-plus-one}, and then summing the two inequalities, gives
    \[
        Q_s
        \leq
        \inner{x_{s+1}-x_s,
        \frac{\wh A_s y_s}{\gamma_s}
        -
        \frac{\wh A_{s+1}y_{s+1}}{\gamma_{s+1}} }.
    \]
    Let $\rho_s\triangleq \gamma_s/\gamma_{s+1}$. A further decomposition shows that
    \begin{align}
        Q_s
        &\leq 
        \frac{\rho_s}{\gamma_s}
        \inner{x_{s+1}-x_s,\wh A_{s+1}(y_s-y_{s+1})}
        +
        \frac{1-\rho_s}{\gamma_s}
        \inner{x_{s+1}-x_s,\wh A_s y_s} \notag\\
        &\qquad
        +
        \frac{\rho_s}{\gamma_s}
        \inner{x_{s+1}-x_s,(\wh A_s-\wh A_{s+1})y_s}.
        \label{eqn:x-stability-decomp}
    \end{align}

    Similarly, first-order optimality of the $y$-updates gives
    \begin{align}
        R_s
        &\leq
        \frac{\rho_s}{\gamma_s}
        \inner{y_{s+1}-y_s,\wh A_{s+1}^\top(x_{s+1}-x_s)}
        +
        \frac{\rho_s-1}{\gamma_s}
        \inner{y_{s+1}-y_s,\wh A_s^\top x_s} \notag\\
        &\qquad
        +
        \frac{\rho_s}{\gamma_s}
        \inner{y_{s+1}-y_s,(\wh A_{s+1}-\wh A_s)^\top x_s}.
        \label{eqn:y-stability-decomp}
    \end{align}

    We now add \pref{eqn:x-stability-decomp} and \pref{eqn:y-stability-decomp}. The first terms cancel because
    \[
        \inner{x_{s+1}-x_s,\wh A_{s+1}(y_s-y_{s+1})}
        +
        \inner{y_{s+1}-y_s,\wh A_{s+1}^\top(x_{s+1}-x_s)}
        =0.
    \]

    We next control the two terms caused by the change of $\gamma_s$. By \pref{eqn:first-order-x-r} with $x=x_{s+1}$,
    \[
        \inner{x_{s+1}-x_s,\frac{\wh A_s y_s}{\gamma_s}}
        \geq
        \inner{x_{s+1}-x_s,\frac{1}{x_s}}.
    \]
    Since $\rho_s\geq 1$, this implies
    \begin{align*}
        \frac{1-\rho_s}{\gamma_s}
        \inner{x_{s+1}-x_s,\wh A_s y_s}
        &\leq
        (1-\rho_s)
        \inner{x_{s+1}-x_s,\frac{1}{x_s}} \\
        &=
        (\rho_s-1)
        \left(
        d-\inner{\frac{1}{x_s},x_{s+1}}
        \right)
        \leq
        (\rho_s-1)d.
    \end{align*}
    Similarly, first-order optimality of the $y$-update at phase $s$ gives
    \[
        \inner{-\wh A_s^\top x_s-\frac{\gamma_s}{y_s},y-y_s}\geq 0,
        \qquad \forall y\in\Delta_d.
    \]
    Taking $y=y_{s+1}$ yields
    \[
        \inner{y_{s+1}-y_s,\frac{\wh A_s^\top x_s}{\gamma_s}}
        \leq
        -\inner{y_{s+1}-y_s,\frac{1}{y_s}}.
    \]
    Therefore,
    \begin{align*}
        \frac{\rho_s-1}{\gamma_s}
        \inner{y_{s+1}-y_s,\wh A_s^\top x_s}
        &\leq
        -(\rho_s-1)
        \inner{y_{s+1}-y_s,\frac{1}{y_s}} \\
        &=
        (\rho_s-1)
        \left(
        d-\inner{\frac{1}{y_s},y_{s+1}}
        \right)
        \leq
        (\rho_s-1)d.
    \end{align*}

    It remains to bound the estimator-difference terms. By \pref{eqn:A-diff-stab},
    \begin{align*}
        \left|
        \inner{x_{s+1}-x_s,(\wh A_s-\wh A_{s+1})y_s}
        \right|
        &\leq
        \sum_{i=1}^d |x_{s+1,i}-x_{s,i}|
        \sum_{j=1}^d
        \left|\wh A_{s,ij}-\wh A_{s+1,ij}\right|y_{s,j} \\
        &\leq
        \varepsilon_s
        \sum_{i=1}^d
        \frac{|x_{s+1,i}-x_{s,i}|}{\sqrt{x_{s,i}}}
        \sum_{j=1}^d \sqrt{y_{s,j}} \\
        &\leq
        \varepsilon_s\sqrt d
        \sum_{i=1}^d
        \frac{|x_{s+1,i}-x_{s,i}|}{\sqrt{x_{s,i}}} \\
        &=
        \varepsilon_s\sqrt d
        \sum_{i=1}^d
        \frac{|x_{s+1,i}-x_{s,i}|}{\sqrt{x_{s+1,i}x_{s,i}}}
        \sqrt{x_{s+1,i}} \\
        &\leq
        \varepsilon_s\sqrt{dQ_s}.
    \end{align*}
    Similarly,
    \[
        \left|
        \inner{y_{s+1}-y_s,(\wh A_{s+1}-\wh A_s)^\top x_s}
        \right|
        \leq
        \varepsilon_s\sqrt{dR_s}.
    \]

    Combining the above bounds gives
    \[
        Q_s+R_s
        \leq
        2(\rho_s-1)d
        +
        \frac{\rho_s\varepsilon_s\sqrt d}{\gamma_s}
        \left(\sqrt{Q_s}+\sqrt{R_s}\right).
    \]
    Let $S_s\triangleq Q_s+R_s$. Since
    \[
        \sqrt{Q_s}+\sqrt{R_s}\leq \sqrt{2S_s},
    \]
    we have
    \[
        S_s
        \leq
        2(\rho_s-1)d
        +
        \frac{\rho_s\varepsilon_s\sqrt d}{\gamma_s}\sqrt{2S_s}.
    \]
    Using $a\sqrt{2S_s}\leq S_s/2+a^2$ with
    \[
        a=\frac{\rho_s\varepsilon_s\sqrt d}{\gamma_s},
    \]
    we obtain
    \[
        S_s
        \leq
        4(\rho_s-1)d
        +
        \frac{2\rho_s^2\varepsilon_s^2 d}{\gamma_s^2}.
    \]
    Recalling that $\varepsilon_s=(1+\alpha)\max\{\beta_s,\beta_{s+1}\}$, the assumed parameter condition implies
    \[
        Q_s+R_s=S_s
        \leq
        \left(\sqrt{\alpha}-\frac{1}{\sqrt{\alpha}}\right)^2
        =
        h(\alpha).
    \]

    Hence, for every coordinate $i$,
    \[
        \frac{x_{s+1,i}}{x_{s,i}}
        +
        \frac{x_{s,i}}{x_{s+1,i}}
        -2
        =
        \frac{(x_{s+1,i}-x_{s,i})^2}{x_{s+1,i}x_{s,i}}
        \leq
        Q_s
        \leq
        h(\alpha).
    \]
    Since $u+u^{-1}-2\leq h(\alpha)$ implies $u\leq \alpha$ for every $u>0$, we get
    \[
        x_{s+1,i}\leq \alpha x_{s,i}.
    \]
    The proof for $y$ is identical, using $R_s\leq h(\alpha)$, and gives
    \[
        y_{s+1,j}\leq \alpha y_{s,j}.
    \]
    Thus the induction closes.
\end{proof}

\pref{lem:stabGame_concise} is then proven by verifying the $\gamma_s$ choice used in \pref{thm:dgapGame} satisfies the condition of \pref{lem:stabGame}.
\begin{proof}[Proof of {\pref{lem:stabGame_concise}}]
    We verify that the choice of $\gamma_s$ and $\beta_s$ satisfy the condition in \pref{lem:stabGame} with $\alpha = 16d$. By definition of $\gamma_s$, we know that
$
    \rho_s
    \triangleq
    \frac{\gamma_s}{\gamma_{s+1}}
    =
    \frac{\beta_s}{\beta_{s+1}}
    =
    \sqrt{
    2\cdot
    \frac{\log(8d^2s^2/\delta)}
    {\log(8d^2(s+1)^2/\delta)}
    }\leq \sqrt{2}.
$
Moreover, for $s\geq 2$,
\begin{align*}
    \log\frac{8d^2(s+1)^2}{\delta}
    &=
    \log\frac{8d^2s^2}{\delta}
    +
    2\log\left(1+\frac{1}{s}\right) \leq
    2\log\frac{8d^2s^2}{\delta},
\end{align*}
where the inequality follows from
$
    2\log\left(1+\frac{1}{s}\right)
    \leq
    2\log\frac32
    \leq
    \log\frac{8d^2s^2}{\delta}.
$
Therefore, we have $1\leq \rho_s\leq \sqrt{2}$. In addition, it also means that $\beta_s\geq \beta_{s+1}$, and hence $\max\{\beta_s,\beta_{s+1}\}=\beta_s$.

Now we check the first condition in \pref{lem:stabGame}. Using
$\gamma_s=16d\beta_s$, $\rho_s\leq\sqrt{2}$, and $\alpha=16d$, we have
\begin{align*}
    &4(\rho_s-1)d
    +
    \frac{
    2\rho_s^2(1+\alpha)^2\max\{\beta_s,\beta_{s+1}\}^2 d
    }{\gamma_s^2} \leq
    4(\sqrt{2}-1)d
    +
    \frac{
    4(1+16d)^2\beta_s^2 d
    }{(16d\beta_s)^2}\leq
    7d.
\end{align*}
On the other hand,
$
    \left(\sqrt{\alpha}-\frac{1}{\sqrt{\alpha}}\right)^2
    =
    \left(\sqrt{16d}-\frac{1}{\sqrt{16d}}\right)^2
    =
    16d-2+\frac{1}{16d}
    \geq
    14d.
$
Therefore the conditions in \pref{lem:stabGame} hold with $\alpha=16d$. Hence, for all $s\geq 2$,
$
    x_{s,i}\leq 16d\,x_{s-1,i}, y_{s,j}\leq 16d\,y_{s-1,j}, \forall i,j\in[d].
$
\end{proof}

With all the above lemmas, we are ready to prove our main result \pref{thm:dgapGame}.

\begin{proof}[Proof of {\pref{thm:dgapGame}}]
We prove the claim on the event $\calE_2$, which holds with probability at least
$1-\delta$ by \pref{lem:closeGame}. Since the length of epoch $s$ is $2^{s-1}$, equivalently, we prove that $\dgap(x_s,y_s)=\order(d^22^{-s/2}\sqrt{\log(d^2s^2/\delta)})$. 
To proceed with the proof, fix any epoch $s\geq 2$ and any pair
$i,j\in[d]$. We first decompose
\begin{align}\label{eqn:dualDecompose}
    x_s^\top A e_j - e_i^\top A y_s
    &=
    x_s^\top \wh A_s e_j - e_i^\top \wh A_s y_s
    + x_s^\top (A-\wh A_s)e_j
    + e_i^\top(\wh A_s-A)y_s .
\end{align}
By \pref{lem:dgapEstGame}, we know that
\begin{align*}
    x_s^\top \wh A_s e_j - e_i^\top \wh A_s y_s
    \leq
    2\gamma_s d
    - \frac{\gamma_s}{x_{s,i}}
    - \frac{\gamma_s}{y_{s,j}} .
\end{align*}
It remains to control the last two estimation-error terms in \pref{eqn:dualDecompose}. For the first one, using $\calE_2$ gives
\begin{align*}
    x_s^\top (A-\wh A_s)e_j
    &\leq
    \sum_{k=1}^d x_{s,k}
    \left|A_{kj}-\wh A_{s,kj}\right| \leq
    \beta_s
    \sum_{k=1}^d
    \frac{x_{s,k}}{\sqrt{x_{s-1,k}y_{s-1,j}}}.
\end{align*}
By \pref{lem:stabGame}, $x_{s,k}\leq 16d\,x_{s-1,k}$ and
$y_{s,j}\leq 16d\,y_{s-1,j}$. Hence
\begin{align*}
    \frac{x_{s,k}}{\sqrt{x_{s-1,k}y_{s-1,j}}}
    &\leq
    16d\cdot \frac{\sqrt{x_{s,k}}}{\sqrt{y_{s,j}}}.
\end{align*}
Therefore, by Cauchy-Schwarz,
\begin{align*}
    x_s^\top (A-\wh A_s)e_j
    \leq
    16d\,\beta_s
    \frac{\sum_{k=1}^d \sqrt{x_{s,k}}}{\sqrt{y_{s,j}}} \leq
    16d\,\beta_s
    \frac{\sqrt{d}}{\sqrt{y_{s,j}}}
    =
    \frac{16d^{3/2}\beta_s}{\sqrt{y_{s,j}}}.
\end{align*}
Similarly, for the other term, we have
\begin{align*}
    e_i^\top(\wh A_s-A)y_s
    \leq
    \sum_{\ell=1}^d y_{s,\ell}
    \left|\wh A_{s,i\ell}-A_{i\ell}\right|  \leq
    \beta_s
    \sum_{\ell=1}^d
    \frac{y_{s,\ell}}{\sqrt{x_{s-1,i}y_{s-1,\ell}}} \leq
    \frac{16d^{3/2}\beta_s}{\sqrt{x_{s,i}}}.
\end{align*}
Combining the three bounds, for every $i,j\in[d]$,
\begin{align*}
    x_s^\top A e_j - e_i^\top A y_s
    \leq
    2\gamma_s d
    - \frac{\gamma_s}{x_{s,i}}
    - \frac{\gamma_s}{y_{s,j}}
    + \frac{16d^{3/2}\beta_s}{\sqrt{x_{s,i}}}
    + \frac{16d^{3/2}\beta_s}{\sqrt{y_{s,j}}}\leq 2\gamma_s d
    + \frac{128d^3\beta_s^2}{\gamma_s},
\end{align*}
where the last inequality is due to AM-GM inequality. We finish the proof by plugging in $\gamma_s=16d\beta_s$ and $\beta_s = \sqrt{\frac{16\log(8d^2s^2/\delta)}{2^{s-2}}}$ and noticing that $\dgap(x_s,y_s)=\max_{i,j\in[d]}\left\{x_s^\top A e_j - e_i^\top A y_s\right\}$.
\end{proof}

\section{Additional Omitted Details for \pref{sec:obstacle}}\label{app:obstacle}

\subsection{Last-Iterate Convergence for Empirical NE with Uniform Exploration}\label{app:NEUniform}
In this section, we present one of our baseline algorithms that mixes the empirically best strategy with uniform exploration. \pref{alg:ne-estimate} also operates in an epoch-wise manner. 

\begin{algorithm}[h]
\caption{Empirical NE with uniform exploration}\label{alg:ne-estimate}
Initialize game estimator $\wh{A}_1$ to be the all-zero $d$ by $d$ matrix.

\For{$s = 1, 2, \dots $}{
    Compute $\alpha_s = d^{1/2}2^{-(s-1)/4}$.
    
    For row player: compute $\wh{x}_s\in \argmin_{x\in \Delta_d}\max_{y\in \Delta_d} x^\top\wh{A}_sy$ and $x_s=(1-\alpha_s)\wh{x}_s+\alpha_s\frac{\one}{d}$.
    

    For column player: compute $\wh{y}_s\in \argmax_{y\in \Delta_d}\min_{x\in \Delta_d} x^\top\wh{A}_sy$ and $y_s=(1-\alpha_s)\wh{y}_s+\alpha_s\frac{\one}{d}$.

    Initialize counters: $n_{s,ij}=0$ for all $i,j\in[d]$.
    
    \For{$t=2^{s-1},\dots,2^{s}-1$}{
        Row player samples $i_t\sim x_s$ and column player samples $j_t\sim y_s$.
        
        Both players observe $(i_t, j_t)$ and $r_t$ with $\E[r_t]=A_{i_tj_t}$.
        
        Increment counter $n_{s,i_tj_t}\leftarrow n_{s,i_tj_t}+1$.
    }

    Compute game estimator $\wh{A}_{s+1,ij}=\frac{\mathbbm{1}\cbr{n_{s,ij}\neq 0}}{n_{s,ij}}\sum_{\tau=2^{s-1}}^{2^{s}-1}r_\tau\cdot \mathbbm{1}\{i_\tau=i,j_\tau=j\}$ for all $i,j\in [d]$.
}
\end{algorithm}

\begin{proposition}\label{prop:ne-estimate}
    For any fixed $\delta \in (0,1)$,
    \pref{alg:ne-estimate} guarantees that with probability at least $1-\delta$, $\dgap(x_t,y_t)=\order\rbr{\sqrt{d\log(d\log t/\delta)}t^{-\frac{1}{4}}}$ for all $t\geq 1$, where $(x_t,y_t)$ denotes the strategy pair played in round $t$.
\end{proposition}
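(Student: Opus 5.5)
We will work on a high-probability concentration event and split the duality gap into an estimation part and an exploration part. Let $\calE_3$ be the event that for all $s\ge 2$ and all $i,j\in[d]$,
\[
\abs{\wh A_{s,ij}-A_{ij}}\le \frac{\beta_s}{\sqrt{x_{s-1,i}y_{s-1,j}}},
\]
with the same $\beta_s=\sqrt{16\log(8d^2s^2/\delta)/2^{s-2}}$. The proof of \pref{lem:closeGame} only uses that $(x_{s-1},y_{s-1})$ is fixed given the past and that sampling is independent across rounds. So it applies verbatim to \pref{alg:ne-estimate}, and $\calE_3$ holds with probability at least $1-\delta$. Because of the uniform mixing, $x_{s-1,i},y_{s-1,j}\ge \alpha_{s-1}/d$. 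Hence on $\calE_3$ we get the uniform bound
\[
\max_{i,j}\abs{\wh A_{s,ij}-A_{ij}}\le \frac{d\beta_s}{\alpha_{s-1}}\triangleq \eta_s .
\]
For $s=1$, or whenever $\alpha_s\ge 1$, the claimed bound is trivial because $\dgap\le 2$ and $\sqrt{d}\,t^{-1/4}\gtrsim 1$. We therefore only treat epochs with $\alpha_s<1$.

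\textbf{Step 1 (empirical NE is near-NE for $A$).} We bound $\dgap(\wh x_s,\wh y_s)$ under the true matrix by $4\eta_s$. Write $x^\top Ay'\le x^\top \wh A_s y'+\eta_s$ for any $x,y'$. Since $(\wh x_s,\wh y_s)$ is an exact NE of $\wh A_s$,
\[
\max_j \wh x_s^\top A e_j-\min_i e_i^\top A\wh y_s\le \max_j \wh x_s^\top\wh A_s e_j-\min_i e_i^\top\wh A_s\wh y_s+2\eta_s=2\eta_s .
\]

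\textbf{Step 2 (cost of mixing).} Since $x\mapsto\max_j x^\top Ae_j$ is convex and $\abs{A_{ij}}\le1$, we have $\max_j x_s^\top Ae_j\le \max_j \wh x_s^\top Ae_j+2\alpha_s$. The same holds for $y_s$. Therefore
\[
\dgap(x_s,y_s)\le 2\eta_s+4\alpha_s .
\]

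\textbf{Step 3 (balance).} Plug in $\alpha_s=d^{1/2}2^{-(s-1)/4}$ and $\beta_s=\order\big(\sqrt{\log(ds/\delta)}\,2^{-s/2}\big)$. This gives
\[
\eta_s=\frac{d\beta_s}{\alpha_{s-1}}=\order\Big(d^{1/2}\sqrt{\log(ds/\delta)}\;2^{-s/2+(s-2)/4}\Big)=\order\Big(d^{1/2}\sqrt{\log(ds/\delta)}\;2^{-s/4}\Big).
\]
Also $\alpha_s=\order(d^{1/2}2^{-s/4})$. Round $t$ lies in epoch $s$ with $2^{s-1}\le t<2^s$, so $s=\order(\log t)$ and $2^{-s/4}=\Theta(t^{-1/4})$. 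This yields the claimed $\order\big(\sqrt{d\log(d\log t/\delta)}\,t^{-1/4}\big)$ for all $t$ simultaneously on $\calE_3$.

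The main obstacle is bookkeeping rather than ideas: the mixing parameter $\alpha_{s-1}$ from the previous epoch enters $\eta_s$, so the exponents must be matched. Choosing $\alpha_s\propto 2^{-s/4}$ exactly balances the $d\beta_s/\alpha_{s-1}$ estimation term against the $\alpha_s$ exploration term, and the mismatch of one epoch only costs a constant factor $2^{1/4}$.
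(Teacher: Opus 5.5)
Your proof is correct and is essentially the paper's argument. It uses the same concentration event $\calE_2$, the same uniform bound $\max_{i,j}|\wh A_{s,ij}-A_{ij}|\le d\beta_s/\alpha_{s-1}$ from the mixing floor, and the same balance of an $\order(\alpha_s)$ exploration cost against this estimation error. The only difference is that you take the two triangle-inequality steps in the opposite order: you pass through the true gap of $(\wh x_s,\wh y_s)$, whereas the paper passes through the empirical gap of $(x_s,y_s)$. Your bookkeeping of the one-epoch lag $\alpha_{s-1}=2^{1/4}\alpha_s$ and of the regime $\alpha_s\ge 1$ is cleaner than the paper's, which writes $\alpha_{s-1}=2\alpha_s$ and appeals to choosing $\alpha_s=\sqrt{\beta_s d}$ rather than the algorithm's actual $\alpha_s$.
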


\begin{proof}
    We prove the claim on the event $\calE_2$, which holds with probability at least $1-\delta$ by \pref{lem:closeGame}. Since epoch $s$ has length $2^{s-1}$, it suffices to show that $\dgap(x_s,y_s)=\order(2^{-s/4}\sqrt{d\log(d^2s^2/\delta)})$. Let $(\wh{x}_s,\wh{y}_s)$ denote a Nash Equilibrium of $\wh{A}_s$.
    Fix any epoch $s\geq 2$. We begin by controlling the duality gap with respect to the empirical estimate $\wh{A}_s$. Note that $\wh A_{s,ij}\in[-1,1]$ holds for all $i,j\in[d]$. From the sampling strategy of \pref{alg:ne-estimate}, we have $\|x_s-\wh x_s\|_1\le 2\alpha_s$ and $\|y_s-\wh y_s\|_1\le 2\alpha_s$.
    Therefore, for all pairs $i,j\in[d]$,
    \begin{align*}
        &x_s^\top \wh A_s e_j \le \wh x_s^\top \wh A_s e_j + 2\alpha_s
        \le \max_{y\in\Delta_d}\wh x_s^\top \wh A_s y
        + 2\alpha_s = \wh x_s^\top \wh A_s \wh y_s
        + 2\alpha_s,\\
        &e_i^\top \wh A_s y_s \ge e_i^\top \wh A_s \wh y_s - 2\alpha_s \ge \min_{x\in\Delta_d}x^\top \wh A_s \wh y_s - 2\alpha_s = \wh x_s^\top \wh A_s \wh y_s - 2\alpha_s.
    \end{align*}
    Thus, for all $i,j\in[d]$, we have $x_s^\top \wh A_s e_j - e_i^\top \wh A_s y_s \le 4\alpha_s$.

    It remains to bound the last two estimation error terms in \pref{eqn:dualDecompose}. For the first term, \pref{lem:closeGame} implies
    \begin{align*}
        x_s^\top (A-\wh A_s)e_j
        &\leq
        \sum_{k=1}^d x_{s,k}
        \left|A_{kj}-\wh A_{s,kj}\right| \leq
        \beta_s
        \sum_{k=1}^d
        \frac{x_{s,k}}{\sqrt{x_{s-1,k}y_{s-1,j}}}\le \frac{\beta_sd}{\alpha_{s-1}} = \frac{\beta_sd}{2\alpha_s},
    \end{align*}
    where the last inequality follows from the fact that $x_{s-1,k}\ge \frac{\alpha_{s-1}}{d}$ and $y_{s-1,j}\ge\frac{\alpha_{s-1}}{d}$ for all $k,j\in[d]$, due to mixing with the uniform distribution.

    Similarly, for the other term
    \begin{align*}
        e_i^\top(\wh A_s-A)y_s
        \leq
        \sum_{\ell=1}^d y_{s,\ell}
        \left|\wh A_{s,i\ell}-A_{i\ell}\right|  \leq
        \beta_s
        \sum_{\ell=1}^d
        \frac{y_{s,\ell}}{\sqrt{x_{s-1,i}y_{s-1,\ell}}} \leq
        \frac{\beta_sd}{\alpha_{s-1}}= \frac{\beta_sd}{2\alpha_s}.
    \end{align*}

    Combining the three bounds, for every $i,j\in[d]$, we get
    \begin{align*}
        x_s^\top A e_j - e_i^\top A y_s
        \leq 4\alpha_s + \frac{\beta_sd}{\alpha_s}.
    \end{align*}
    We have $\beta_s = \sqrt{\frac{16\log(8d^2s^2/\delta)}{2^{s-2}}}$. Setting $\alpha_s$ optimally as $\sqrt{\beta_s d}$ yields $\dgap(x_s,y_s)=\max_{i,j\in[d]}\left\{x_s^\top A e_j - e_i^\top A y_s\right\} = \order(2^{-s/4}\sqrt{d\log(d^2s^2/\delta)})$.
\end{proof}

\subsection{Discussion on Low-Regret-Low-Variance Property in Games}\label{app:low-reg-low-var}
In this subsection, we explain why a low-duality-gap-low-variance property, analogous to \pref{lem:low-reg-low-var}, is insufficient to establish last-iterate convergence in the game setting. We begin by presenting the following lemma, the game counterpart to \pref{lem:low-reg-low-var}, which follows directly from \pref{lem:dgapEstGame}.

\begin{lemma}\label{lem:low-reg-low-var-game}
    Define $\wh{\dgap}_s(x,y)=\max_{j\in[d]}x^\top\wh{A}_s e_j - \min_{i\in[d]}e_i^\top\wh{A}_sy$ for all $s\geq 1$. For every epoch $s\geq 1$, \pref{alg:game_falcon} satisfies:
    \begin{align}
        \wh{\dgap}_s(x_s,y_s)&\leq 2d\gamma_s,\\
        \frac{1}{x_{s,i}} + \frac{1}{y_{s,j}} &\leq 2d + \frac{\wh{\dgap}_s(e_i,e_j)}{\gamma_s},~\forall i,j\in[d]. \label{eq:low-variance_game}
    \end{align}
\end{lemma}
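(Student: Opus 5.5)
Both claims follow from \pref{lem:dgapEstGame}, using only two facts: the $\max$/$\min$ in $\wh{\dgap}_s$ range over pure actions, and $x_s,y_s$ are probability vectors.

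\textbf{First inequality.} Write $\wh{\dgap}_s(x_s,y_s)=\max_{i,j\in[d]}\rbr{x_s^\top\wh A_s e_j-e_i^\top\wh A_s y_s}$. For each fixed pair $(i,j)$, \pref{eqn:dgapEstGame} bounds the bracket by $2d\gamma_s-\frac{\gamma_s}{x_{s,i}}-\frac{\gamma_s}{y_{s,j}}$. Since $x_{s,i},y_{s,j}>0$ (the log-barrier keeps the saddle point in the interior), we can drop the two negative terms. Taking the maximum over $(i,j)$ gives $\wh{\dgap}_s(x_s,y_s)\le 2d\gamma_s$.

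\textbf{Second inequality.} Rearranging \pref{eqn:dgapEstGame} gives
\[
\frac{1}{x_{s,i}}+\frac{1}{y_{s,j}}\le 2d+\frac{e_i^\top\wh A_s y_s-x_s^\top\wh A_s e_j}{\gamma_s}.
\]
It then suffices to show the numerator is at most $\wh{\dgap}_s(e_i,e_j)=\max_{h}\wh A_{s,ih}-\min_k\wh A_{s,kj}$. Since $y_s\in\Delta_d$, $e_i^\top\wh A_s y_s$ is a convex combination of the entries $\wh A_{s,ih}$, so it is at most $\max_h\wh A_{s,ih}$. Likewise $x_s^\top\wh A_s e_j$ is a convex combination of the $\wh A_{s,kj}$, so it is at least $\min_k\wh A_{s,kj}$. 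Subtracting these two bounds gives the claim.

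There is no real obstacle; the only care needed is the direction of the averaging bounds, which mirrors how \pref{eq:low-var} was derived from \pref{eq:est-reg} in the single-player case.
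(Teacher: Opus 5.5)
Your proposal is correct and follows the same route as the paper. Both inequalities are read off from Lemma~\ref{lem:dgapEstGame}: the first by dropping the negative terms and maximizing over $(i,j)$, the second by rearranging and using $e_i^\top\wh A_s y_s - x_s^\top \wh A_s e_j \le \wh{\dgap}_s(e_i,e_j)$, where your convex-combination argument supplies the justification for this last step that the paper leaves implicit.
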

\begin{proof}
    By \pref{lem:dgapEstGame}, for all $i,j\in[d]$, we have:
    \begin{align*}
        x_s^\top \wh{A}_{s}e_j - e_i^\top \wh{A}_{s}y_s \leq 2d\gamma_s  - \frac{\gamma_s}{x_{s,i}} - \frac{\gamma_s}{y_{s,j}} \leq 2d\gamma_s.
    \end{align*}
    Furthermore, rearranging the terms in \pref{eqn:dgapEstGame} yields:
    \begin{align*}
        \frac{1}{x_{s,i}} + \frac{1}{y_{s,j}} \leq 2d - \frac{1}{\gamma_s} \left(x_s^\top \wh{A}_{s}e_j - e_i^\top \wh{A}_{s}y_s\right) \leq 2d + \frac{\wh{\dgap}_s(e_i,e_j)}{\gamma_s}.
    \end{align*}
    This completes the proof.
\end{proof}

Unlike the single-agent case, however, we cannot directly bound the true duality gap $\dgap(x_s,y_s)$ even if the following multiplicative closeness analogous to the single-player case holds due to \pref{eq:low-variance_game}:
\begin{align*}
    \wh{\dgap}_s(e_i,e_j)&\leq 2\dgap(e_i,e_j) + \otil(d\gamma_s),\\
    {\dgap}(e_i,e_j)&\leq 2\wh{\dgap}_s(e_i,e_j) + \otil(d\gamma_s).
\end{align*}

This is because of the following crucial distinction: In the single-agent case, expectations translate linearly (i.e., $\E_{i\sim x_s}[\wh{\Reg}_s(e_i)]=\wh{\Reg}_s(x_s)$ and $\E_{i\sim x_s}[{\Reg}(e_i)]={\Reg}(x_s)$), which allows multiplicative closeness for pure actions to neatly imply closeness for mixed strategies. In the game setting, this linear translation breaks down. Instead, we are restricted to Jensen's inequality ($\E_{i\sim x_s,j\sim y_s}[\dgap(e_i,e_j)]\geq \dgap(x_s,y_s)$), which fundamentally prevents the application of a similar argument.